\documentclass[letterpaper]{article}

\PassOptionsToPackage{numbers, compress}{natbib}
\usepackage{fullpage}
\usepackage[utf8]{inputenc}
\usepackage{amssymb,amsmath}
\usepackage{algorithm,algorithmic}

\DeclareMathOperator*{\argmax}{arg\,max}
\DeclareMathOperator*{\argmin}{arg\,min}
\newcommand{\shortrnote}[1]{ &  &  & \text{\footnotesize\llap{#1}}}
\newcommand{\longrnote}[1]{ &  &  \\   &  &  &  &  & \notag \text{\footnotesize\llap{#1}}}
\usepackage{multicol}
\usepackage{pifont}
\usepackage{times}
\usepackage{amsmath}
\usepackage{amsfonts}
\usepackage{amsthm}
\usepackage{amssymb}
\usepackage{algorithm}
\usepackage{algorithmic}

\usepackage{graphicx, xcolor}
\usepackage{caption}
\usepackage{subcaption}
\usepackage{tabu}
\usepackage{float}
\usepackage{algorithm}
\usepackage{algorithmic}
\usepackage{float}
\newfloat{algorithm}{t}{lop}

\usepackage{epstopdf}
\usepackage{cuted}
\usepackage{enumerate}
\usepackage{enumitem}
\usepackage{verbatim}
\usepackage{hyperref}
\usepackage{multirow}
\usepackage{hhline}
\usepackage{url}
\usepackage[utf8]{inputenc}
\usepackage{pifont}
\usepackage[english]{babel} 
\usepackage{supertabular}
\usepackage{natbib}
\usepackage{float}
%
%
\def\BState{\State\hskip-\ALG@thistlm}

\usepackage[utf8]{inputenc} 
\usepackage[T1]{fontenc}    
\usepackage{url}            
\usepackage{booktabs}       
\usepackage{nicefrac}       
\usepackage{microtype}      
\usepackage{bm}

\newcommand{\Ocal}{{\mathcal{O}}}
\newcommand{\R}{{\mathbb{R}}}
\def\ba{{\boldsymbol{a}}}
\def\bb{{\boldsymbol{b}}}
\def\be{{\boldsymbol{e}}}
\def\bg{{\boldsymbol{g}}}

\def\bx{{\boldsymbol{x}}}
\def\bw{{\boldsymbol{w}}}
\def\by{{\boldsymbol{y}}}
\def\bv{{\boldsymbol{v}}}

\def\bz{{\boldsymbol{z}}}
\def\br{{\boldsymbol{r}}}
\def\Acal{{\mathcal{A}}}
\usepackage{comment} 
\usepackage{footnote} 
\newcommand{\La}{{\mathcal{L}}}
\newtheorem{assumption}{Assumption}[section]
\newtheorem{theorem}{Theorem}[section]
\newtheorem{corollary}{Corollary}[theorem]
\newtheorem{lemma}[theorem]{Lemma}
\newtheorem{definition}[theorem]{Definition}

\newtheorem{claim}[theorem]{Claim}

\newcommand{\A}{\mathcal{A}}

\newcommand{\define}{\overset{\text{def}}{=}}

\usepackage{authblk}

\title{Primal-Dual Block Frank-Wolfe}
\author[$\dagger$]{Qi Lei\footnote{Both authors contribute equally.}}
\author[$\ast\dagger$]{\ \ Jiacheng Zhuo}
\author[$\dagger$]{\ \ Constantine Caramanis}
\author[$\dagger\ddagger$]{\ \ Inderjit S. Dhillon}
\author[$\dagger$]{\ \ Alexandros G. Dimakis}
\affil[ ]{$^\dagger$ UT Austin \ \  $^\ddagger$ Amazon}
\affil[ ]{\texttt{\{leiqi@oden., jzhuo@, constantine@, inderjit@cs., dimakis@austin.\}utexas.edu} }

\begin{document}

\maketitle

\begin{abstract}


    We propose a variant of the Frank-Wolfe algorithm for solving a class of sparse/low-rank optimization problems. Our formulation includes Elastic Net, regularized SVMs and phase retrieval as special cases. The proposed Primal-Dual Block Frank-Wolfe algorithm reduces the per-iteration cost while maintaining linear convergence rate.
    The per iteration cost of our method depends on the structural complexity of the solution (i.e. sparsity/low-rank) instead of the ambient dimension.
    We empirically show that our algorithm outperforms the state-of-the-art methods on (multi-class) classification tasks.
\end{abstract}

\section{Introduction}
We consider optimization problems of the form:
$$
\min_{\bm{x} \in C}: \,\, \sum_i f_i(\bm{a}_i^{\top}\bm{x}) + g(\bm{x}),
$$
directly motivated by regularized and constrained Empirical Risk Minimization (ERM). Particularly, we are interested in problems whose solution has special ``simple'' structure like low-rank or sparsity. 
The sparsity constraint applies to large-scale multiclass/multi-label classification, low-degree polynomial data mapping \cite{chang2010training}, random feature kernel machines \cite{rahimi2008random}, and Elastic Net \cite{zou2005regularization}. 
Motivated by recent applications in low-rank multi-class SVM, phase retrieval, matrix completion, affine rank minimization and other problems (e.g., \cite{dudik2012lifted,pong2010trace,argyriou2008convex,candes2015phase}), we also consider settings where the constraint $\bm{x} \in C$ (e.g., trace norm ball) while convex, may be difficult to project onto. A wish-list for this class of problems would include an algorithm that (1) exploits the function finite-sum form and the simple structure of the solution, (2) achieves linear convergence for smooth and strongly convex problems, (3) 
does not pay a heavy price for the projection step. 
    
We propose a Frank-Wolfe (FW) type method that attains these three goals.  This does not come without challenges:
Although it is currently well-appreciated that FW type algorithms avoid the cost of projection \cite{jaggi2013revisiting,allen2017linear}, the benefits are limited to constraints that are hard to project onto, like the trace norm ball.
For problems like phase retrieval and ERM for multi-label multi-class classification, the gradient computation requires large matrix multiplications. This dominates the per-iteration cost, and the existing FW type methods do not asymptotically reduce time complexity per iteration, even without paying the expensive projection step. 
Meanwhile, for simpler constraints like the $\ell_1$ norm ball or the simplex, it is unclear if FW can offer any benefits compared to other methods.
Moreover, as is generally known, FW suffers from sub-linear convergence rate even for well-conditioned problems that enjoy strong convexity and smoothness. 

In this paper we tackle the challenges by exploiting the special structure induced by the constraints and FW steps. 
%
We propose a variant of FW that we call Primal-Dual Block Frank Wolfe. The main advantage is that the computational complexity depends only on the sparsity of the solution, rather than the ambient dimension, i.e. it is \textit{dimension free}. This is achieved by conducting \textit{partial updates} in each iteration, i.e., sparse updates for $\ell_1$ and low-rank updates for the trace norm ball. 
While the benefits of \textit{partial updates} seem unclear for the original problem, they significantly benefit a primal-dual reformulation.  
This reduces the per iteration cost to roughly a ratio of $\frac{s}{d}$ compared to naive Frank-Wolfe, where $s$ is the sparsity (or rank) of the optimal solution, and $d$ is the feature dimension. Meanwhile, the per iteration progress of our proposal is comparable to a full gradient descent step, thus retaining linear convergence rate. 

For strongly convex and smooth $f$ and $g$ we show that 
our algorithm achieves linear convergence with per-iteration cost $sn$ over $\ell_1$-norm ball, where $s$ upper bounds the sparsity of the primal optimal. Specifically, for sparse ERM with smooth hinge loss or quadratic loss with $\ell_2$ regularizer, our algorithm yields an overall $\Ocal(s(n+\kappa)\log\frac{1}{\epsilon})$ time complexity to reach $\epsilon$ duality gap, where $\kappa$ is the condition number (smoothness divided by strong convexity). Our theory has minimal requirements on the data matrix $A$. 

Experimentally we observe our method yields significantly better performance compared to prior work, especially when the data dimension is large and the solution is sparse.

\section{Related Work}

We review relevant algorithms that improve the overall performance of Frank-Wolfe type methods. Such improvements are roughly obtained for two reasons: the enhancement on convergence speed and the reduction on iteration cost. Very few prior works benefit in both.  

Nesterov's acceleration has proven effective 
as in Stochastic Condition Gradient Sliding (SCGS) \cite{lan2016conditional} and other variants \cite{weintraub1985accelerating,meyer1974accelerated,garber2015faster}. Restarting techniques dynamically adapt to the function geometric properties and fills in the gap between sublinear and linear convergence for FW method \cite{kerdreux2018restarting}. Some variance reduced algorithms obtain linear convergence as in \cite{hazan2016variance}, however, the number of inner loops grows
significantly and hence the method is not computationally efficient.

Linear convergence has been obtained specifically for polytope constraints like \cite{nanculef2014novel,kumar2011linearly}, as well as the work proposed in \cite{lacoste2015global,goldfarb2017linear} that use the Away-step Frank Wolfe and Pair-wise Frank Wolfe, and their stochastic variants. 
One recent work \cite{allen2017linear} focuses on trace norm constraints and proposes a FW-type algorithm that yields similar progress as projected gradient descent per iteration but is almost projection free. However, in many applications where gradient computation dominates the iteration complexity, the reduction on projection step doesn't necessarily produce asymptotically better iteration costs.

The sparse update introduced by FW steps was also appreciated by \cite{lacoste2012block}, where they conducted dual updates with a focus on SVM with polytope constraint. Their algorithm yields low iteration costs but still suffer from sub-linear convergence.



On the other hand, the recently popularized primal-dual formulation $\min_{\bx}\max_{\by}\{g(\bx)+\by^\top A\bx - f(\by)\}$ has proven useful for different machine learning tasks like reinforcement learning, ERM, and robust optimization \cite{du2018linear}. Especially for the ERM related problems, the primal-dual formulation still inherits the finite-sum structure from the primal form, and could be used to reduce variance \cite{zhang2014stochastic,wang2017exploiting} or reduces communication complexity in the distributed setting \cite{xiao2019dscovr}. However, there has been a lack of studies for constrained problems for this form. Meanwhile, as we emphasize in this paper, since the most time-consuming part of this formulation comes from the bilinear term $\by^\top A\bx$, we are able to exploit simple structure from the optimal solutions.


\section{Setup}
\subsection{Notation}
We briefly introduce the notation used throughout the paper. We use bold lower case letter to denote vectors, capital letter to represent matrices. $\|\cdot\|$ is $\ell_2$ norm for vectors and Frobenius norm for matrices unless specified otherwise. $\|\cdot\|_*$ indicates the trace norm for a matrix.

We say a function $f$ is $\alpha$ strongly convex if 
    $f(\by) \geq f(\bx) + \langle \bg, \by-\bx\rangle +\frac{\alpha}{2}\|\by-\bx\|^2,$
where $\bg\in \partial f(\bx)$ is any sub-gradient of $f$.  
Similarly, $f$ is $\beta$-smooth when
$    f(\by) \leq f(\bx) + \langle \bg, \by-\bx\rangle +\frac{\beta}{2}\|\by-\bx\|^2.$
We use $f^*$ to denote the convex conjugate of $f$, i.e., $f^*(\by)\define \max_{\bx}\langle \bx, \by\rangle - f(\bx)$. Some more parameters are problem-specific and are defined when needed.
\subsection{A Theoretical Vignette}
\label{sec:vignette}
To elaborate the techniques we use to obtain the linear convergence for our Frank-Wolfe type algorithm,
we consider the $\ell_1$ norm constrained problem as an illustrating example:
\begin{equation}
\label{eqn:l1_example}
    \argmin_{\bx\in \R^d, \|\bx\|_1\leq \tau} f(\bx),
\end{equation}
where $f$ is $L$-smooth and $\mu$-strongly convex.
If we invoke the Frank Wolfe algorithm, we compute
\begin{equation} 
\bx^{(t)}\leftarrow (1-\eta) \bx^{(t-1)} + \eta \tilde{\bx}, \quad
\text{ where } 
\tilde{\bx} \leftarrow \argmin_{\|\bx\|_1\leq \tau} \langle \nabla f(\bx^{(t-1)}), \bx\rangle.
\label{eqn:old_fw}
\end{equation}
Even when the function $f$ is smooth and strongly convex, 
\eqref{eqn:old_fw} converges sublinearly.
As inspired by \cite{allen2017linear}, 
if we assume the optimal solution is $s$-sparse, 
we can enforce a sparse update while maintaining linear convergence by a mild modification on \eqref{eqn:old_fw}: 
\begin{equation}
\bx^{(t)}\leftarrow (1-\eta) \bx^{(t-1)} + \eta \tilde{\bx}, 
\text{ where } 
 \tilde{\bx} \leftarrow \argmin_{\|\bx\|_1\leq \tau, \|\bx\|_0\leq s}  \{\langle \nabla f(\bx^{(t-1)}),  \bx\rangle +\frac{L}{2} \eta \|\bx^{(t-1)} - \bx\|_2^2\}.
    \label{eqn:block_frank_wolfe}
\end{equation}
We also call this new practice block Frank-Wolfe as in \cite{allen2017linear}. The proof of convergence can be completed within three lines. Let $h_t=f(\bx^{(t)})-f^*$.
\begin{align}
\nonumber 
    h_{t} &= f(\bx^{(t-1)}+\eta (\tilde{\bx} -\bx^{(t-1)}) ) -f^*\\
\nonumber     
    &\leq h_{t-1}+\eta \langle \nabla f(\bx^{(t-1)}), \tilde{ \bx} - \bx^{(t-1)}\rangle +\frac{L}{2} \eta^2 \|\tilde{ \bx}-\bx^{(t-1)}\|^2  \shortrnote{~(Smoothness of $f$)}\\
\nonumber     
    &\leq  h_{t-1}+\eta \langle \nabla f(\bx^{(t-1)}), \bx^* - \bx^{(t-1)}\rangle +\frac{L}{2} \eta^2 \|\bx^*-\bx^{(t-1)}\|^2 \shortrnote{(Definition of $\tilde {\bx}$)} \\
    \label{eqn:fw_progress}
    &\leq (1-\eta+\frac{L}{\mu} \eta^2) h_{t-1} \shortrnote{(by convexity and $\mu$-strong convexity of $f$)}
\end{align}
Therefore, when $\eta=\frac{\mu}{2L}$, $h_{t+1}\leq (1-\frac{\mu}{4L})^t h_1$ and the iteration complexity is $\Ocal(\frac{L}{\mu} \log(1/\epsilon))$ to achieve $\epsilon$ error.

Although we achieve linear convergence, the advantage of overall complexity against classical methods (e.g. Projected Gradient Descend (PGD)) is not shown yet. Luckily, with the update $\tilde{\bx}$ being sparse, it is possible to improve the iteration complexity, while maintaining the linear convergence rate. In order to differentiate, we name the sparse update nature of \eqref{eqn:block_frank_wolfe} as \textit{partial update}.

Consider quadratic functions as an example:$f(\bx)=\frac{1}{2}\bx^\top A\bx$, whose gradient is $A \bx$ for symmetric $A$. As $\tilde{\bx}$ is sparse, One can maintain the value of the gradient efficiently: $A\bx^{(t)}\equiv (1-\eta)A\bx^{(t-1)} +\eta A_{I,:}\tilde{\bx}$, where $I$ is the support set of $\tilde{\bx}$. We therefore reduce the complexity of one iteration to $\Ocal(sd)$, compared to $\Ocal(d^2)$ with PGD. Similar benefits hold when we replace $\bx$ by a matrix $X$ and conduct a low-rank update on $X$.

The benefit of \textit{partial update} is not limited to quadratic functions.
Next we show that for a class of composite function, we are able to take the full advantage of the \textit{partial update},
by taking a primal-dual re-formulation.

\section{Methodology}

\subsection{Primal-Dual Formulation}
Note that the problem we are tackling is as follows
\begin{equation}
\label{eqn:primal_composite}
    \min_{\bx\in C} \left\{P(\bx)\equiv  \frac{1}{n}\sum_{i=1}^nf_i(\ba_i^\top\bx) + g(\bx) \right\},
\end{equation}
We first focus on the setting where $\bx \in \mathbb{R}^d$ is a vector and $C$ is the $\ell_1$-norm ball. 
This form covers general classification or regression tasks with $f_i$ being some loss function and $g$ being a regularizer. 
Extension to matrix optimization over a trace norm ball is introduced in Section \ref{sec:trace_norm}.

Even with the constraint, we could reform \eqref{eqn:primal_composite} as a primal-dual convex-concave saddle point problem:
\begin{eqnarray}
\label{eqn:primal_dual}
 \eqref{eqn:primal_composite} &\Leftrightarrow& \min_{\bx\in C } \max_{\by }\left\{\La(\bx,\by)\equiv g(\bx) + \frac{1}{n}\langle \by, A\bx\rangle - \frac{1}{n}\sum_{i=1}^n f^*_i(y_i) \right\},
\end{eqnarray}
or its dual formulation:
\begin{eqnarray} 
\label{eqn:dual} 
\eqref{eqn:primal_composite}\Leftrightarrow  \max_{\by }\left\{ D(\by):= \min_{\bx\in C } \left\{g(\bx) + \frac{1}{n}\langle \by, A\bx\rangle\right\} - \frac{1}{n}\sum_{i=1}^n f^*_i(y_i) \right\}. 
\end{eqnarray}
Notice \eqref{eqn:dual} is not guaranteed to have an explicit form.
Therefore some existing FW variants like \cite{lacoste2012block} that optimizes over \eqref{eqn:dual} may not apply.
Instead, we directly solve the convex concave problem \eqref{eqn:primal_dual} and  could therefore solve more general problems, including complicated constraint like trace norm.

Since the computational cost of the gradient $\nabla_{\bx}\La$ and $\nabla_{\by}\La$ is dominated by computing $A^\top\by$ and $A\bx$ respectively, \textit{sparse updates} could reduce computational costs by a ratio of roughly $\Ocal(d/s)$ for updating $\bx$ and $\by$ while achieving good progress.



\subsection{Primal-Dual Block Frank-Wolfe}
With the primal-dual formulation, we are ready to introduce our algorithm. The idea is simple: since the primal variable $\bx$ is constrained over $\ell_1$ norm ball, we conduct block Frank-Wolfe algorithm and achieve an $s$-sparse update. Meanwhile, for the dual variable $\by$ we conduct greedy coordinate ascent method to select and update $k$ coordinates ($k$ determined later). We selected coordinates that allow the largest step, which is usually referred as a Gauss-Southwell rule denoted by {\bf GS-r} \cite{nutini2015coordinate}. Our algorithm is formally presented in Algorithm \ref{alg:PD_l1}. The parameters to set in the algorithm request setting assumptions on $f$ and $g$:
\begin{assumption}
\label{assump:main}
We assume the functions satisfy the following properties:
\begin{itemize}[topsep=0pt,parsep=0pt,partopsep=0pt]
\item Each loss function $f_i$ is convex and $\beta$-smooth, and is $\alpha$ strongly convex over some convex set (could be $\R$), and linear otherwise. 
\item $\max_{i}\|\ba_i\|_2^2\leq R$. Therefore $\frac{1}{n}\sum_{i=1}^nf_i(\ba_i^\top\bx)$ is $\beta R$-smooth. 
\item $g$ is $\mu$-strongly convex and $L$-smooth. 
\end{itemize} 
\end{assumption}
Suitable loss functions $f_i$ include smooth hinge loss \cite{shalev2013stochastic} and quadratic loss function. Relevant applications covered are Support Vector Machine (SVM) with smooth hinge loss, elastic net \cite{zou2005regularization}, and linear regression problem with quadratic loss.





\begin{algorithm*}[thb]
{\footnotesize 
\caption{Frank-Wolfe Block Primal-Dual Method for $\ell_1$ Norm Ball}
\begin{algorithmic}[1]
\STATE {\bfseries Input:}  Training data $A\in \mathbb{R}^{n\times d}$, primal and dual step size $\eta,\delta>0$.
\STATE {\bfseries Initialize:}  $\bx^{(0)}\leftarrow0\in \mathbb{R}^d$, $\by^{(0)}\leftarrow0\in \mathbb{R}^n, \bw^{(0)}\equiv A\bx=0\in \R^n, \bz^{(0)}\equiv A^{\top}\by=0\in\R^d $
\FOR{$t=1,2,\cdots, T$}
\STATE 
Use Block Frank Wolfe to update the primal variable: 
\begin{equation}
 \tilde{ \bx} \leftarrow \argmin_{\|\bx\|_1\leq \lambda, \|\bx\|_0\leq s}  \{\langle \frac{1}{n}\bz^{(t-1)} +\nabla g(\bx^{(t-1)}), \bx\rangle +\frac{L}{2} \eta \|\bx - \bx^{(t-1)}\|^2\}
 \label{eqn:deltax}
\end{equation}
$$ \bx^{(t)}\leftarrow (1-\eta)\bx^{(t-1)} + \eta \tilde{\bx} $$
\STATE
Update $w$ to maintain the value of $A\bx$:
\begin{equation}
\bw^{(t)}\leftarrow (1-\eta)\bw^{(t-1)}+ \eta A\Delta\bx 
\label{eqn:update_w}
\end{equation}
\STATE
Consider the potential dual update:
\begin{equation} \label{dual_search}
\tilde{\by} =  \argmax_{\by'}\left\{ \frac{1}{n} \langle \bw^{(t)}, \by'\rangle- f^*(\by')-\frac{1}{2\delta}\|\by'-\by^{(t-1)}\|^2 \right\}. 
\end{equation} 
\STATE 
Choose greedily the dual coordinates to update:
let $I^{(t)}$ be the top $k$ coordinates that maximize 
$$ |\tilde{y}_i - y_i^{(t-1)} |, i\in [n].  $$
Update the dual variable accordingly: 
\begin{equation} 
y_i^{(t)}\leftarrow \left\{
\begin{array}{lc}
    \tilde{y}_i &  \text{ if }i\in I^{(t)}\\
    y_i^{(t-1)} & \text{ otherwise.} 
\end{array}
\right.
\end{equation}

\STATE 
Update $z$ to maintain the value of $A^{\top}\by$
\begin{equation}
\bz^{(t)}\leftarrow \bz^{(t-1)}+ A_{:,I^{(t)}}^\top (\by^{(t)}-\by^{(t-1)})
\label{eqn:update_z}
\end{equation}
\ENDFOR
\STATE {\bfseries Output:} $\bx^{(T)}, \by^{(T)}$
\end{algorithmic}
\label{alg:PD_l1}}
\end{algorithm*}
As $\La(\bx,\by)$ is $\mu$-strongly convex and $L$-smooth with respect to $\bx$, we set the primal learning rate $\eta= \frac{\mu}{2L}$ according to Section \ref{sec:vignette}. Meanwhile, the dual learning rate $\delta$ is set to balance its effect on the dual progress as well as the primal progress. We specify it in the theoretical analysis part.


The computational complexity for each iteration in Algorithm \ref{alg:PD_l1} is $\Ocal(ns)$.
Both primal and dual update could be viewed as roughly three steps: coordinate selection, variable update, and maintaining $A^T\by$ or $A\bx$. 
The coordinate selection as Eqn. \eqref{eqn:deltax} for primal and the choice of $I^{(t)}$ for dual variable respectively take $\Ocal(d)$ and $\Ocal(n)$ on average if implemented with the quick selection algorithm. 
Meanwhile, the variable update costs $\Ocal(s)$ and $\Ocal(k)$. 
The dominating cost is maintaining $A\bx$ as in Eqn. \eqref{eqn:update_w} that takes $\Ocal(ns)$, and maintaining $\A^\top \by$ as in Eqn. \eqref{eqn:update_z} that takes $\Ocal(dk)$. To balance the time budget for primal and dual step, we set $k=ns/d$ and achieve an overall complexity of $\Ocal(ns)$ per iteration.


\subsection{Theoretical Analysis}
We derive convergence analysis under Assumption \ref{assump:main}. 
The derivation consists of the analysis on the primal progress, the balance of the dual progress, and their overall effect.

Define the primal gap as $\Delta^{(t)}_p\define \La(\bx^{(t+1)},\by^{(t)})-\La(\bar{\bx}^{(t)},\by^{(t)})$, where $\bar{\bx}^{(t)}$ is the primal optimal solution such that the dual $D(\by^{(t)})=\La(\bar{\bx}^{(t)},\by^{(t)})$, and is sparse enforced by the $\ell_1$ constraint. The dual gap is $\Delta^{(t)}_d\define D^*-D(\by^{(t)})$. 
We analyze the convergence rate of duality gap $\Delta^{(t)}\equiv \max\{1,(\beta/\alpha-1)\} \Delta^{(t)}_p+\Delta^{(t)}_d$. 

{\bf Primal progress: } Firstly, similar to the analysis in Section \ref{sec:vignette}, we could derive that primal update introduces a sufficient descent as in Lemma \ref{lemma:primal_progress}. 
$$\La(\bx^{(t+1)}, \by^{(t)})-\La(\bx^{(t)}, \by^{(t)})\leq -\frac{\eta}{2} \Delta^{(t)}_p.$$

{\bf Dual progress: } With the {\bf GS-r} rule to carefully select and update the most important $k$ coordinates in the dual variable in \eqref{dual_search}, we are able to derive the following result on dual progress that diminishes dual gap as well as inducing error. 
$$-\|\by^{(t)} - \by^{(t-1)}\|^2
    \leq -\frac{k\delta}{n\beta}\Delta_d^{(t)} + \frac{k\delta}{n^2} R\|\bar{\bx}^{(t)}-\bx^{(t)}\|_2^2    $$
Refer to Lemma \ref{lemma:dual_progress} for details.

{\bf Primal Dual progress:} The overall progress evolves as: 
$$\Delta^{(t)}-\Delta^{(t-1)} \leq
 \overbrace{\La(\bx^{(t+1)},\by^{(t)})-\La(\bx^{(t)},\by^{(t)})}^{\text{primal progress}}-\frac{1}{4\delta}\overbrace{\|\by^{(t)}-\by^{(t-1)}\|^2}^{\text{dual progress}}+ \frac{3\delta Rk}{2n^2}\overbrace{\|\bar{\bx}^{(t)}-\bx^{(t)}\|^2}^{\text{primal hindrance}}.$$
In this way, we are able to connect the progress on duality gap with constant fraction of its value, and achieve linear convergence: 
\begin{theorem}
\label{thm:main} 
Given a function $P(\bx)=\sum_{i=1}^n f_i(\ba_i^\top \bx)+g(\bx)$ that satisfies Assumption \ref{assump:main}. Set $s$ to upper bound the sparsity of the primal optimal $\bar{\bx}^{(t)}$, and learning rates $\eta = \frac{\mu}{2L}, \delta = \frac{1}{k}(\frac{L}{\mu n \beta}+\frac{5\beta R}{2\alpha \mu n^2}(1+4\frac{L}{\mu}))^{-1}$.
The duality gap $\Delta^{(t)}=\max\{1,\frac{\beta}{\alpha}-1\}\Delta^{(t)}_p+\Delta^{(t)}_d$ generated by Algorithm \ref{alg:PD_l1} takes $\Ocal(\frac{L}{\mu}(1+\frac{\beta}{\alpha}\frac{R\beta}{n\mu})\log \frac{1}{\epsilon})$ iterations to achieve $\epsilon$ error. The overall complexity is $\Ocal(ns\frac{L}{\mu}(1+\frac{\beta}{\alpha}\frac{R\beta}{n\mu})\log \frac{1}{\epsilon})$.
\end{theorem}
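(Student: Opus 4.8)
The plan is to combine the three building blocks already stated — primal progress (Lemma~\ref{lemma:primal_progress}), dual progress (Lemma~\ref{lemma:dual_progress}), and the one-step duality-gap recursion — into a single contraction on $\Delta^{(t)}$, then unroll. First I would establish the "primal hindrance" bound: I need to control $\|\bar{\bx}^{(t)}-\bx^{(t)}\|^2$ by the primal gap $\Delta^{(t)}_p$. Since $\La(\cdot,\by^{(t)})$ is $\mu$-strongly convex in $\bx$ and $\bar{\bx}^{(t)}$ is its minimizer over $C$, strong convexity gives $\frac{\mu}{2}\|\bar{\bx}^{(t)}-\bx^{(t)}\|^2 \le \La(\bx^{(t)},\by^{(t)})-\La(\bar{\bx}^{(t)},\by^{(t)})$; I then need to relate $\La(\bx^{(t)},\by^{(t)})-\La(\bar{\bx}^{(t)},\by^{(t)})$ to $\Delta^{(t)}_p=\La(\bx^{(t+1)},\by^{(t)})-\La(\bar{\bx}^{(t)},\by^{(t)})$, which the primal-progress lemma does (the iterate $\bx^{(t+1)}$ is only better, so up to the $(1-\eta/2)$-type factor these are comparable). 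This yields $\frac{3\delta Rk}{2n^2}\|\bar{\bx}^{(t)}-\bx^{(t)}\|^2 \le c\,\delta k \frac{R}{\mu n^2}\Delta^{(t)}_p$ for an explicit constant.

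Next I would substitute the dual-progress inequality $-\frac{1}{4\delta}\|\by^{(t)}-\by^{(t-1)}\|^2 \le -\frac{k}{4n\beta}\Delta_d^{(t)} + \frac{k}{4n^2}R\|\bar{\bx}^{(t)}-\bx^{(t)}\|^2$ and the primal-progress inequality $\La(\bx^{(t+1)},\by^{(t)})-\La(\bx^{(t)},\by^{(t)})\le -\frac{\eta}{2}\Delta^{(t)}_p$ into the one-step recursion. After collecting the $\Delta^{(t)}_p$ terms (one negative, of order $\eta=\mu/(2L)$, and the two positive hindrance terms, of order $\delta k R/(n^2\mu)$), the choice $\delta=\frac1k(\frac{L}{\mu n\beta}+\frac{5\beta R}{2\alpha\mu n^2}(1+4\frac L\mu))^{-1}$ is precisely calibrated so that the positive hindrance terms are dominated by a fixed fraction (say half) of the $\frac\eta2\Delta_p^{(t)}$ descent, leaving a net negative coefficient $-\frac{\eta}{4}$ on $\Delta_p^{(t)}$ and a net negative coefficient $-\Theta(\frac{k}{n\beta})$ on $\Delta_d^{(t)}$. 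The weighting factor $\max\{1,\beta/\alpha-1\}$ in the definition of $\Delta^{(t)}$ is chosen so that $\frac{\eta}{4}\Delta^{(t)}_p$ and the dual decrease can both be lower-bounded by the \emph{same} multiple of $\Delta^{(t)}$ — here I will also need the standard saddle-point fact that $\Delta^{(t)}_d$ itself dominates a multiple of $\Delta^{(t)}_p$ up to the $\beta/\alpha$ factor (coming from $f_i$ being $\alpha$-strongly convex on the relevant set and $\beta$-smooth), so that a bound in terms of $\Delta_p^{(t)}$ alone still buys contraction of the whole potential.

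This gives $\Delta^{(t)} \le (1-\rho)\Delta^{(t-1)}$ with $\rho = \Theta\!\big(\min\{\eta,\ \frac{k}{n\beta}\}\big) = \Theta\!\big(\frac{\mu}{L}\cdot\frac{1}{1+\frac{\beta}{\alpha}\frac{R\beta}{n\mu}}\big)$ once $k=ns/d$ is plugged in and the expression for $\delta$ is used to read off which term is the bottleneck; unrolling gives the $\Ocal(\frac{L}{\mu}(1+\frac{\beta}{\alpha}\frac{R\beta}{n\mu})\log\frac1\epsilon)$ iteration count, and multiplying by the $\Ocal(ns)$ per-iteration cost established after Algorithm~\ref{alg:PD_l1} gives the overall complexity. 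The main obstacle is the bookkeeping in the previous paragraph: verifying that the specific constant in $\delta$ makes the hindrance terms absorbable while \emph{simultaneously} keeping the dual coefficient negative, and checking that the potential weights let both leftover negative terms be expressed as a uniform fraction of $\Delta^{(t)}$ — i.e. making sure the two competing requirements on $\delta$ (large enough for dual progress, small enough that primal hindrance is controlled) are compatible, which is exactly what the stated value of $\delta$ encodes.
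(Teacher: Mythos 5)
Your plan matches the paper's own proof essentially step for step: it combines Lemma~\ref{lemma:primal_dual} with Lemma~\ref{lemma:dual_progress}, controls the hindrance term $\|\bar{\bx}^{(t)}-\bx^{(t)}\|^2$ via $\mu$-strong convexity of $\La(\cdot,\by^{(t)})$ and Lemma~\ref{lemma:primal_progress}, and then calibrates $\delta$ so that both $\Delta_p^{(t)}$ and $\Delta_d^{(t)}$ carry matching negative coefficients, yielding the contraction $\Delta^{(t)}\leq\frac{1}{1+a}\Delta^{(t-1)}$ (the paper handles $\beta>2\alpha$ by reweighting the potential, exactly as your $\max\{1,\beta/\alpha-1\}$ factor anticipates). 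Two small notes: the paper never needs the auxiliary fact that $\Delta_d^{(t)}$ dominates $\Delta_p^{(t)}$ — the two terms are contracted independently — and the iteration count does not depend on the choice $k=ns/d$ since only the product $k\delta$ enters the rate.
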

For our target applications like elastic net, or ERM with smooth hinge loss, we are able to connect the time complexity to the condition number of the primal form. 

\begin{corollary}
\label{coro:main}
Given a smooth hinge loss or quadratic loss $f_i$ that is $\beta$-smooth, and $\ell_2$ regularizer $g=\frac{\mu}{2}\|\bx\|^2$. Define the condition number $\kappa=\frac{\beta R}{\mu}$. Setting $s$ upper bounds the sparsity of the primal optimal $\bar{\bx}^{(t)}$, and learning rates $\eta = \frac{1}{2}, \delta = \frac{1}{k}(\frac{1}{n \beta}+\frac{25 R}{2\mu n^2})^{-1}$,
the duality gap $\Delta^{(t)}$ takes $\Ocal((1+\frac{\kappa}{n})\log \frac{1}{\epsilon})$ iterations to achieve $\epsilon$ error. The overall complexity is $\Ocal(s(n+\kappa)\log \frac{1}{\epsilon})$.
\end{corollary}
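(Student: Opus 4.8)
The plan is to prove Corollary \ref{coro:main} by instantiating Theorem \ref{thm:main} with the specific loss functions and regularizer, then simplifying the resulting complexity expression. First I would verify that smooth hinge loss and quadratic loss satisfy the hypotheses of Assumption \ref{assump:main}: both are $\beta$-smooth by assumption, and both are $\alpha$-strongly convex on the relevant region (with $\alpha = \beta$ for the quadratic loss, and for smooth hinge loss $\alpha = \beta$ on the quadratic piece, linear elsewhere), so in fact $\beta/\alpha = 1$ in the key ratio. This is the crucial simplification: the factor $\max\{1, \beta/\alpha - 1\}$ in the duality gap definition and the $\frac{\beta}{\alpha}$ multiplying $\frac{R\beta}{n\mu}$ both collapse to constants.

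Next I would plug in $g = \frac{\mu}{2}\|\bx\|^2$, which is $\mu$-strongly convex and $\mu$-smooth, so $L = \mu$ and therefore $L/\mu = 1$ and $\eta = \frac{\mu}{2L} = \frac{1}{2}$, matching the stated learning rate. Substituting $L = \mu$, $\alpha = \beta$ into the formula for $\delta$ from Theorem \ref{thm:main}, $\delta = \frac{1}{k}\left(\frac{L}{\mu n \beta} + \frac{5\beta R}{2\alpha\mu n^2}(1 + 4\frac{L}{\mu})\right)^{-1} = \frac{1}{k}\left(\frac{1}{n\beta} + \frac{5R}{2\mu n^2}\cdot 5\right)^{-1} = \frac{1}{k}\left(\frac{1}{n\beta} + \frac{25R}{2\mu n^2}\right)^{-1}$, which is exactly the $\delta$ claimed in the corollary. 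Then the iteration count $\Ocal\!\left(\frac{L}{\mu}(1 + \frac{\beta}{\alpha}\frac{R\beta}{n\mu})\log\frac1\epsilon\right)$ becomes $\Ocal\!\left((1 + \frac{R\beta}{n\mu})\log\frac1\epsilon\right) = \Ocal\!\left((1 + \frac{\kappa}{n})\log\frac1\epsilon\right)$ after recognizing $\kappa = \frac{\beta R}{\mu}$.

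Finally, for the overall complexity I would multiply the iteration count by the per-iteration cost $\Ocal(ns)$ established in the discussion following Algorithm \ref{alg:PD_l1}: $\Ocal(ns)\cdot\Ocal\!\left((1 + \frac{\kappa}{n})\log\frac1\epsilon\right) = \Ocal\!\left(s(n + \kappa)\log\frac1\epsilon\right)$, distributing $n$ over the sum. I would also need to confirm that the choice $k = ns/d$ used to obtain the $\Ocal(ns)$ per-iteration cost is consistent with the $\delta$ expression (the $\frac1k$ factor is absorbed, so the bound is stated independent of $k$).

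The main obstacle I anticipate is not the algebra, which is routine substitution, but carefully justifying the strong-convexity-over-a-set condition for the smooth hinge loss and pinning down the precise constant $\alpha$ relative to $\beta$ so that $\beta/\alpha - 1$ is genuinely bounded (ideally zero or a small constant); one must check that the smooth hinge loss of \cite{shalev2013stochastic} is indeed $\beta$-strongly convex on its curved region and affine outside, and that the analysis of Theorem \ref{thm:main} only ever invokes strong convexity of $f_i$ on that region — in particular that $\bar{\bx}^{(t)}$ keeps the arguments $\ba_i^\top\bar{\bx}^{(t)}$ in the appropriate range, or that the ``linear otherwise'' clause is handled correctly in Lemma \ref{lemma:dual_progress}. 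Everything else is a direct specialization.
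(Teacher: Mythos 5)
Your proposal is correct and follows exactly the route the paper intends: the corollary is a direct specialization of Theorem \ref{thm:main} (the paper gives no separate proof), and your substitutions $L=\mu$, $\alpha=\beta$ recover the stated $\eta$, $\delta$, and complexity bounds precisely. The caveat you flag about the smooth hinge loss being strongly convex on a convex set and linear elsewhere is resolved in the paper's appendix (the explicit form of $h$ and Claim \ref{claim:indicator}), so nothing is missing.
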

Our derivation of overall complexity implicitly requires $ns\geq d$ by setting $k=sd/n\geq 1$. This is true for our considered applications like SVM. Otherwise we choose $k=1$ and the complexity becomes $\Ocal(\max\{d,ns\}(1+\frac{\kappa}{n})\log\frac{1}{\epsilon})$.

In Table \ref{table:time_complexity}, we briefly compare the time complexity of our algorithm with some benchmark algorithms: (1) Accelerated Projected Gradient Descent (PGD) (2) Frank-Wolfe algorithm (FW) (3) Stochastic Variance Reduced Gradient (SVRG) \cite{rie2013acceler} (4) Stochastic Conditional Gradient Sliding (SCGS) \cite{lan2016conditional} and (5) Stochastic Variance-Reduced Conditional Gradient Sliding (STORC) \cite{hazan2016variance}. The comparison is not thorough but intends to select constrained optimization that improves the overall complexity from different perspective. Among them, accelerated PGD improves conditioning of the problem, while SCGS and STORC reduces the dependence on number of samples. In the experimental session we show that our proposal outperforms the listed algorithms under various conditions.
\begin{table}[h]
\begin{center}
\begin{tabular}{|c|c|c|}
\hline
Algorithm        & Per Iteration Cost & Iteration Complexity\\ \hline
Frank Wolfe
 & $\Ocal(nd)$ & $\Ocal(\frac{1}{\epsilon})$ \\ \hline
Accelerated PGD  \cite{nesterov2013introductory}
 & $\Ocal(nd)$ & $\Ocal(\sqrt{\kappa}\log\frac{1}{\epsilon})$ \\ \hline
 SVRG \cite{rie2013acceler}
 & $\Ocal(nd)$ & $\Ocal((1+\kappa/n)\log\frac{1}{\epsilon})$ \\ \hline
SCGS \cite{lan2016conditional}
          & $\Ocal(\kappa^2 \frac{\#\text{iter}^3}{\epsilon^2}d)$ & $\Ocal(\frac{1}{\epsilon})$ \\ \hline
STORC \cite{hazan2016variance}
         & $\Ocal(\kappa^2d+nd)$ & $\Ocal(\log\frac{1}{\epsilon}) $ \\ \hline
Primal Dual FW (ours)   
& $\Ocal(ns)$ & $\Ocal((1+\kappa/n)\log\frac{1}{\epsilon})$ \\ \hline
\end{tabular}
\end{center}
\caption{Time complexity comparisons on the setting of Corollary \ref{coro:main}. For clear comparison, we refer the per iteration cost as the time complexity of outer iterations. 
}
\label{table:time_complexity}
\vspace{-15pt}
\end{table}
\subsection{Extension to the Trace Norm Ball}
\label{sec:trace_norm}

\begin{algorithm*}[thb]
{\footnotesize
\caption{Frank-Wolfe Block Primal-Dual Method for Trace Norm Ball}
\begin{algorithmic}[1]
\STATE {\bfseries Input:}  Training data $A\in \mathbb{R}^{n\times d}$, primal and dual step size $\eta,\delta>0$. Target accuracy $\epsilon$.
\STATE {\bfseries Initialize:}  $X^{(0)}\leftarrow0\in \mathbb{R}^{d\times c}$, $Y^{(0)}\leftarrow0\in \mathbb{R}^{n\times c}, W^{(0)}\equiv AX=0\in \R^{n\times c}, Z^{(0)}\equiv A^{\top}Y=0\in\R^{d\times c} $
\FOR{$t=1,2,\cdots, T$}
\STATE 
Use Frank Wolfe to Update the primal variable: 
$$X^{(t)}\leftarrow (1-\eta)X^{(t-1)} + \eta \tilde{X}, \text{ where }\tilde{X} \leftarrow (\frac{1}{2},\frac{\epsilon}{8}) \text{-approximation of Eqn. \eqref{eqn:deltaX}}. $$
\STATE
Update $W$ to maintain the value of $AX$:
\begin{equation}
W^{(t)}\leftarrow (1-\eta)W^{(t-1)}+ \eta A\tilde{X}
\label{eqn:update_W}
\end{equation}
\STATE 
Consider the potential dual update:
\begin{equation} \label{dual_update_trace}
\tilde{Y}^{(t)}\leftarrow \argmax_{Y} \left\{\langle W,Y\rangle-f^*(Y) -\frac{1}{2\delta}\|Y-Y^{(t-1)}\|^2 \right\}
\end{equation}
\STATE 
Choose greedily the rows of the dual variable to update: let $I^{(t)}$ be the top $k$ coordinates that maximize
$$ \left\|\tilde{Y}_{i,:}-Y^{(t-1)}_{i,:}\right\|_2,i\in [n]. $$
Update the dual variable accordingly:
\begin{equation} 
Y_{i,:}^{(t)}\leftarrow \left\{
\begin{array}{lc}
    \tilde{Y}_{i,:} &  \text{ if }i\in I^{(t)}\\
    Y_{i,:}^{(t-1)} & \text{ otherwise.} 
\end{array}
\right.
\end{equation}
\STATE 
Update $Z$ to maintain the value of $A^{\top}Y$
\begin{equation}
Z^{(t)}\leftarrow Z^{(t-1)}+ A^\top (Y^{(t)}-Y^{(t-1)})
\label{eqn:update_Z}
\end{equation}
\ENDFOR
\STATE {\bfseries Output:} $X^{(T)}, Y^{(T)}$
\end{algorithmic}
\label{alg:PD_tracenorm}}
\end{algorithm*}

We also extend our algorithm to matrix optimization over trace norm constraints: 
\begin{equation}
\min_{\|X\|_*\leq \lambda, X\in\R^{d\times c}}\left\{ \frac{1}{n}\sum_{i=1}^n f_i(\ba_i^\top X)+g(X)\right\}.
\label{eqn:primal_trace}
\end{equation}
This formulation covers multi-label multi-class problems, matrix completion, affine rank minimization, and phase retrieval problems (see reference therein \cite{candes2015phase,allen2017linear}). Equivalently, we solve the following primal-dual problem:
$$\min_{\|X\|_*\leq \lambda, X\in \R^{d\times c}}\max_{Y\in \R^{n\times c}} \left\{\La(X,Y)\equiv g(X)+\frac{1}{n}\langle AX,Y\rangle - \frac{1}{n}\sum_{i=1}^n f^*_i(\by_i)\right\}.$$ Here $\by_i$ is the $i$-th row of the dual matrix $Y$. 
For this problem, the \textit{partial update} we enforced on the primal matrix is to keep the update matrix low rank: 
\begin{equation}
 \tilde{X}  \leftarrow
 \hspace{-5pt}
 \argmin_{\|X\|_*\leq \lambda, \text{rank}(X)\leq s}  \left\{\langle \frac{1}{n}Z +\nabla g(X^{(t-1)}), X\rangle +\frac{L}{2} \eta \|X - X^{(t-1)}\|^2\right\}, Z\equiv A^\top Y^{(t-1)}.
 \label{eqn:deltaX}
\end{equation}
However, an exact solution to \eqref{eqn:deltaX} requires computing the top $s$ left and right singular vectors of the matrix $X^{(t-1)}-\frac{1}{\eta L}(Z+\nabla g(X^{(t-1)})\in \R^{d\times c}$. 
Therefore we loosely compute an $(\frac{1}{2},\epsilon/2)$-approximation, where $\epsilon$ is the target accuracy, based on the following definition:
\begin{definition}[Restated Definition 3.2 in \cite{allen2017linear}]
Let $l_t(V) =\langle \nabla_X \La(X^{(t)},Y^{(t)}), V - X^{(t)}\rangle + \frac{L}{2}\eta \|V - X^{(t)}\|_F^2$ be the objective function in \eqref{eqn:deltaX}, and let $l^*_t = l_t(\bar{X}^{(t)})$. Given parameters $\gamma \geq 0$ and $\epsilon \geq 0$, a feasible solution $V$ to \eqref{eqn:deltaX} is called $(\gamma, \epsilon)$-approximate if it satisfies $l(V) \leq (1 - \gamma)l^*_t + \epsilon$.
\end{definition}
The time dependence on the data size $n,c,d,s$ is $ncs+s^2(n+c)$ \cite{allen2017linear}, and is again independent of $d$. Meanwhile, the procedures to keep track of $ W^{(t)}\equiv AX^{(t)}$ requires complexity of $nds+ncs$, while updating $Y^{(t)}$ requires $ dck$ operations. Therefore, by setting $k\leq ns(1/c+1/d)$, the iteration complexity's dependence on the data size becomes $\Ocal(n(d+c)s)$ operations, instead of $\Ocal(ndc)$ for conducting a full projected gradient step. Recall that $s$ upper bounds the rank of $\bar{X}^{(t)}\leq \min\{d,c\}$.

The trace norm version mostly inherits the convergence guarantees for vector optimization. Refer to the Appendix for details. 

\begin{assumption}
We assume the following property on the primal form \eqref{eqn:primal_trace}:
\begin{itemize}[topsep=0pt,parsep=0pt,partopsep=0pt]
    \item  $f_i$ is convex, and $\beta$-smooth. Its convex conjugate $f_i^*$ exists and satisfies $\frac{1}{\alpha}$-smooth on some convex set (could be $\R^c$) and infinity otherwise.
\item Data matrix $A$ satisfies $R=\max_{|I|\leq k, I\subset [n]}\sigma_{\max}^2(A_{I,:})$ ($\leq \|A\|_2^2$). Here $\sigma_{\max}(X)$ denotes the largest singular value of $X$.
\item  $g$ is $\mu$-strongly convex and $L$-smooth. 
\end{itemize}
\label{assump:trace}
\end{assumption}
The assumptions also cover smooth hinge loss as well as quadratic loss. With the similar assumptions, the convergence analysis for Algorithm \ref{alg:PD_tracenorm} is almost the same as Algorithm \ref{alg:PD_l1}. The only difference comes from the primal step where approximated update produces some error:

{\bf Primal progress: } With the primal update rule in Algorithm \ref{alg:PD_tracenorm}, it satisfies 
$\La(X^{(t+1)},Y^{(t)}) -  \La(X^{(t)},Y^{(t)})\leq -\frac{\mu}{8L} \Delta_p^{(t)}+\frac{\epsilon}{16}$. (See Lemma \ref{lemma:primal_progress_trace}.)
With no much modification in the proof, we are able to derive similar convergence guarantees for the trace norm ball. 
\begin{theorem}
\label{thm:main_trace}
Given a function $\frac{1}{n}\sum_{i=1}^n f_i(\ba_i^\top X)+g(X)$ that satisfies Assumption \ref{assump:trace}. Setting $s\geq$rank$(\bar{X}^{(t)})$, and learning rate $\eta=\frac{\mu}{2L}, \delta \leq \frac{1}{k}(\frac{L}{\mu n \beta}+\frac{5\beta R}{2\alpha \mu n^2}(1+8\frac{L}{\mu}))^{-1}$, the duality gap $\Delta^{(t)}$ generated by Algorithm \ref{alg:PD_tracenorm} satisfies $\Delta^{(t)}\leq \frac{k\delta}{k\delta+8\beta n}\Delta^{(t-1)}+\frac{\epsilon}{16}. $ Therefore it takes $\Ocal(\frac{L}{\alpha}(1+\frac{\beta}{\alpha}\frac{R\beta}{n\mu})\log\frac{1}{\epsilon} )$ iterations to achieve $\epsilon$ error. 
\end{theorem}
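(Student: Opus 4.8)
The plan is to replay the three-part argument behind Theorem~\ref{thm:main}, replacing each ingredient by its approximate-primal-step analogue and carrying the extra $O(\epsilon)$ error through to the end. First I would invoke the primal-progress bound (Lemma~\ref{lemma:primal_progress_trace}), $\La(X^{(t+1)},Y^{(t)})-\La(X^{(t)},Y^{(t)})\le-\tfrac{\mu}{8L}\Delta_p^{(t)}+\tfrac{\epsilon}{16}$, where the constant degrades from $\tfrac14$ (vector case) to $\tfrac18$ precisely because $\tilde X$ is only a $(\tfrac12,\tfrac\epsilon2)$-approximate minimizer of \eqref{eqn:deltaX}. Second, I would establish the trace-norm analogue of Lemma~\ref{lemma:dual_progress}: the proximal step \eqref{dual_update_trace} together with the row-wise \textbf{GS-r} selection of the top-$k$ rows (which retains at least a $k/n$ fraction of $\|\tilde Y^{(t)}-Y^{(t-1)}\|^2$) and the $\tfrac1\alpha$-smoothness of $f_i^*$ gives $-\|Y^{(t)}-Y^{(t-1)}\|^2\le-\tfrac{k\delta}{n\beta}\Delta_d^{(t)}+\tfrac{k\delta}{n^2}R\,\|\bar{X}^{(t)}-X^{(t)}\|^2$, now with $R=\max_{|I|\le k}\sigma_{\max}^2(A_{I,:})$. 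Third, I would write out the one-step duality-gap decomposition exactly as in the displayed ``primal--dual progress'' inequality,
\[
\Delta^{(t)}-\Delta^{(t-1)}\le\big[\La(X^{(t+1)},Y^{(t)})-\La(X^{(t)},Y^{(t)})\big]-\tfrac1{4\delta}\|Y^{(t)}-Y^{(t-1)}\|^2+\tfrac{3\delta Rk}{2n^2}\|\bar{X}^{(t)}-X^{(t)}\|^2,
\]
the passage from $Y^{(t-1)}$ to $Y^{(t)}$ inside $\La$ being absorbed using $\tfrac1\alpha$-smoothness of $f^*$.

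Next I would close the recursion. The ``primal hindrance'' $\|\bar{X}^{(t)}-X^{(t)}\|^2$ is controlled by $\mu$-strong convexity of $X\mapsto\La(X,Y^{(t)})$: since $\bar{X}^{(t)}$ is its minimizer over $\{\|X\|_*\le\lambda\}$, $\tfrac\mu2\|\bar{X}^{(t)}-X^{(t)}\|^2\le\La(X^{(t)},Y^{(t)})-D(Y^{(t)})$, and the right-hand side is at most a constant multiple of $\Delta_p^{(t-1)}+\Delta_d^{(t)}$ after re-expressing $\La(X^{(t)},Y^{(t)})$ through $\La(X^{(t)},Y^{(t-1)})$. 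Substituting this bound and the dual-progress bound for $-\|Y^{(t)}-Y^{(t-1)}\|^2$ into the decomposition, using the primal-progress bound for the bracketed term, and then regrouping the coefficients of $\Delta_p^{(t)}$, $\Delta_p^{(t-1)}$ and $\Delta_d^{(t)}$ with $\Delta^{(t)}=\max\{1,\tfrac\beta\alpha-1\}\Delta_p^{(t)}+\Delta_d^{(t)}$, the stated choice $\delta\le\tfrac1k\big(\tfrac{L}{\mu n\beta}+\tfrac{5\beta R}{2\alpha\mu n^2}(1+8\tfrac L\mu)\big)^{-1}$ is exactly what makes the remaining cross terms nonpositive, leaving the one-step contraction $\Delta^{(t)}\le\rho\,\Delta^{(t-1)}+\tfrac\epsilon{16}$ with the factor $\rho\in(0,1)$ stated in the theorem. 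Unrolling, $\Delta^{(t)}\le\rho^t\Delta^{(0)}+\tfrac{\epsilon}{16(1-\rho)}$; since $1-\rho$ is bounded below by an explicit positive constant (traceable to the leading $\tfrac{L}{\mu n\beta}$ term in $\delta^{-1}$), the accumulated error is $O(\epsilon)$, and $t=\Ocal\big(\tfrac1{1-\rho}\log\tfrac{\Delta^{(0)}}\epsilon\big)$, which after substituting $\delta$ and simplifying is the $\Ocal\big(\tfrac L\alpha(1+\tfrac\beta\alpha\tfrac{R\beta}{n\mu})\log\tfrac1\epsilon\big)$ claimed.

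The hard part will be the bookkeeping around the primal-hindrance term: I must charge $\|\bar{X}^{(t)}-X^{(t)}\|^2$ to the duality gap through strong convexity while simultaneously (a) accounting for the within-iteration dual move $Y^{(t-1)}\!\to\!Y^{(t)}$, which changes the point $\bar{X}^{(t)}$ refers to, and (b) showing that the $\epsilon$-errors from the approximate SVD in \eqref{eqn:deltaX} enter only additively and telescope into a single $O(\epsilon)$ residual rather than compounding over iterations. A secondary, genuinely matrix-specific step is re-deriving Lemma~\ref{lemma:dual_progress} for the row-wise top-$k$ rule and checking that $\max_{|I|\le k}\sigma_{\max}^2(A_{I,:})$ cleanly replaces $\max_i\|\ba_i\|^2$ everywhere; the rest is a transcription of the $\ell_1$ proof.
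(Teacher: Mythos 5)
Your proposal follows the paper's own proof essentially step for step: primal progress via Lemma~\ref{lemma:primal_progress_trace} with the degraded $\tfrac{\mu}{8L}$ constant and the additive $\tfrac{\epsilon}{16}$ error from the approximate SVD, the row-wise \textbf{GS-r} dual-progress bound with $R=\max_{|I|\le k}\sigma_{\max}^2(A_{I,:})$ replacing $\max_i\|\ba_i\|^2$, the same primal--dual decomposition with the hindrance term $\|\bar{X}^{(t)}-X^{(t)}\|^2$ charged to the primal gap via $\mu$-strong convexity, and the same choice of $\delta$ to close the recursion into $\Delta^{(t)}\le\rho\Delta^{(t-1)}+\tfrac{\epsilon}{16}$. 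The one caveat is the accumulated error: the geometric sum gives $\tfrac{\epsilon}{16(1-\rho)}$ with $1-\rho=\tfrac{k\delta}{k\delta+8\beta n}$, which is not bounded below by a universal constant as you assert, but the paper's own proof glosses over exactly the same point, so this is a shared imprecision rather than a gap specific to your argument.
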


We also provide a brief analysis on the difficulty to extend our algorithm to polytope-type constraints in the Appendix \ref{sec:polytope}. 

\section{Experiments}

\begin{figure*}[thb]
\begin{center}
\includegraphics[width=0.34\linewidth]{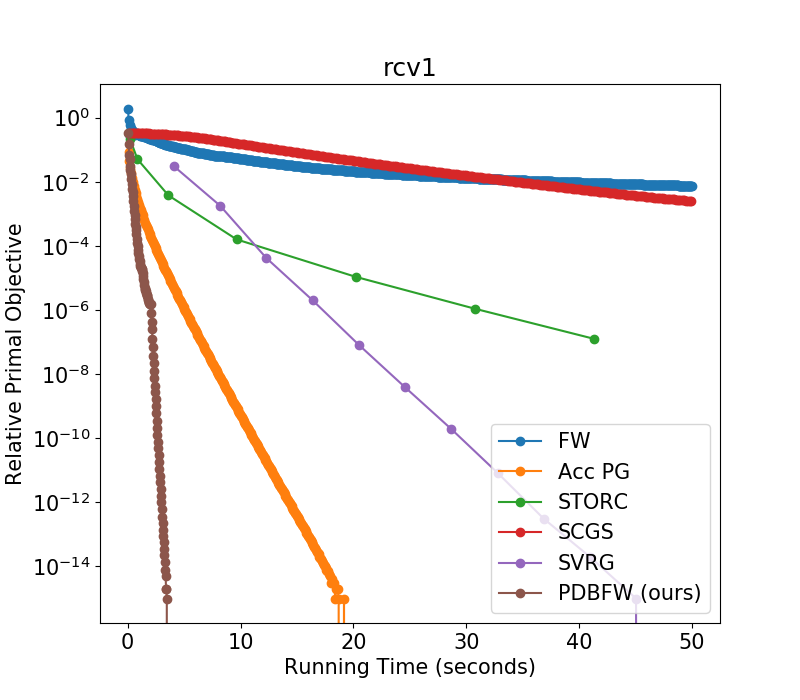}
\hspace{-15pt}
\includegraphics[width=0.34\linewidth]{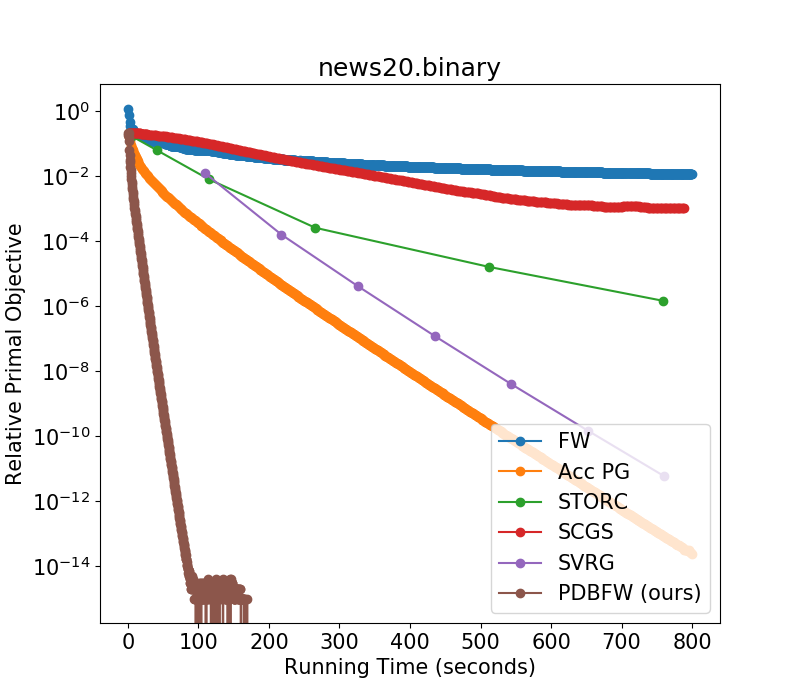}
\hspace{-15pt}
\includegraphics[width=0.34\linewidth]{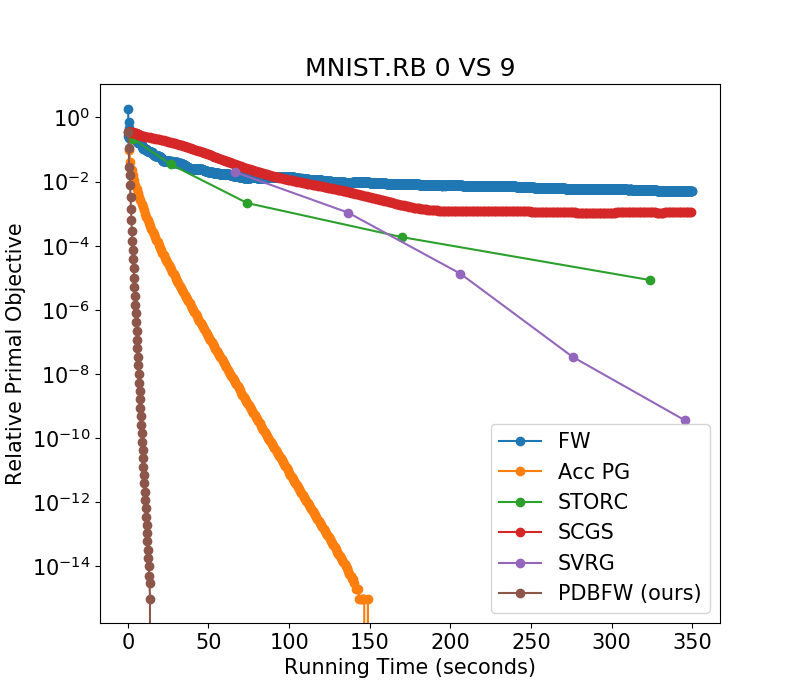}
\vspace{-15pt}

\includegraphics[width=0.34\linewidth]{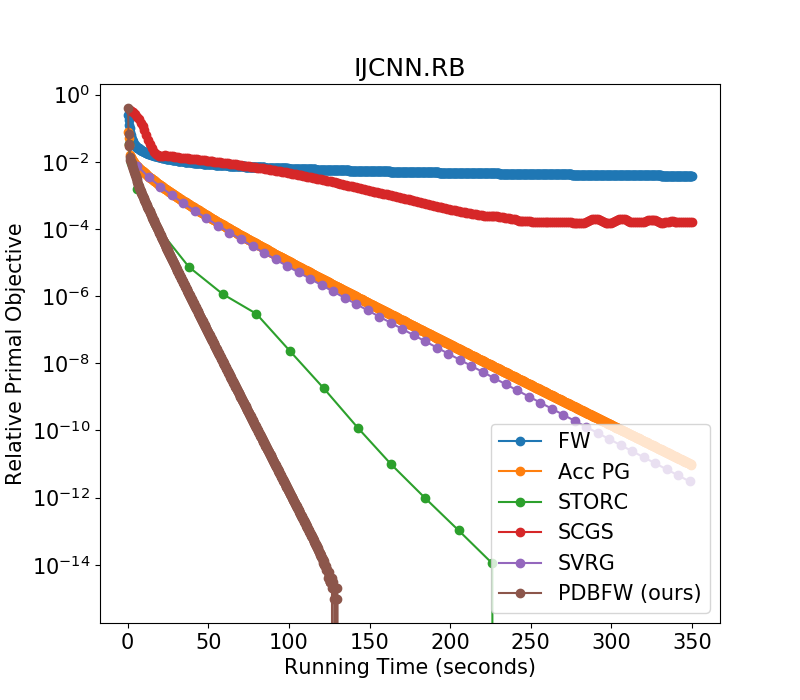}
\hspace{-15pt}
\includegraphics[width=0.34\linewidth]{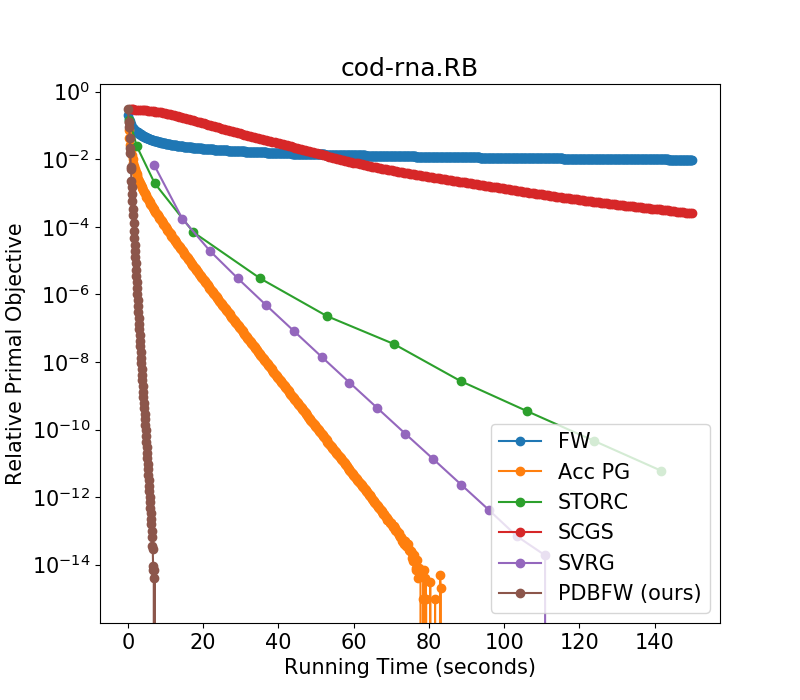}
\hspace{-15pt}
\includegraphics[width=0.34\linewidth]{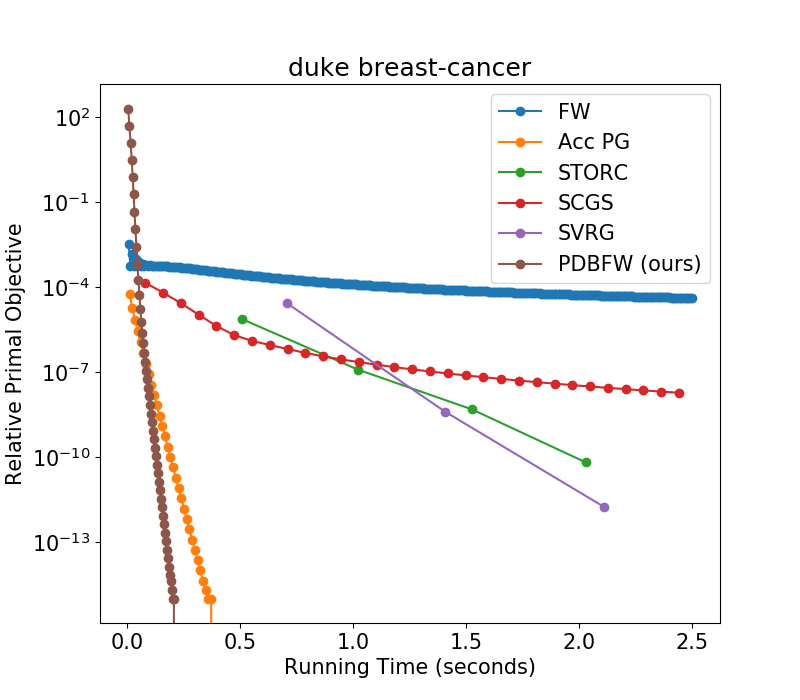}
\end{center}
\vspace{-10pt}
\caption{{\bf Convergence result comparison of different algorithms on smoothed hinge loss.} For six different datasets, we show the decrease of relative primal objective: $({P(\bx^{(t)})-P^*})/{P^*}$ over CPU time. Our algorithm (brown) achieves around 10 times speedup over all other methods except for the smallest dataset duke.}
\label{fig:hinge_loss} 
\vspace{-5pt}
\end{figure*}

We evaluate the primal-dual Frank-Wolfe algorithm by its performance on binary classification with smoothed hinge loss. We refer the readers to Appendix \ref{sec:smooth_hinge_loss} for details about smoothed hinge loss.

We compare the proposed algorithm against five benchmark algorithms:  (1) Accelerated Projected Gradient Descent (Acc PG) (2) Frank-Wolfe algorithm (FW) (3) Stochastic Variance Reduced Gradient (SVRG) \cite{rie2013acceler} (4) Stochastic Conditional Gradient Sliding (SCGS) \cite{lan2016conditional} and (5) Stochastic Variance-Reduced Conditional Gradient Sliding (STORC) \cite{hazan2016variance}. We presented the time complexity for each algorithm in Table \ref{table:time_complexity}. Three of them (FW, SCGS, STORC) are projection-free algorithms, and the other two (Acc PG, SVRG) are projection-based algorithms. Algorithms are implemented in C++, with the Eigen linear algebra library \cite{eigenweb}. 


The six datasets used here are summarized in Table \ref{table:data}. All of them can be found in LIBSVM datasets \cite{chang2011libsvm}. 
We augment the features of MNIST, ijcnn, and cob-rna by random binning \cite{rahimi2008random}, which is a standard technique for kernel approximation. 
Data is normalized. 
We set the $\ell_1$ constraint to be $300$ and the $\ell_2$ regularize parameter to $10/n$ to achieve reasonable prediction accuracy. 
We refer the readers to the Appendix \ref{sec:more_l1_result} for results of other choice of parameters. These datasets have various scale of features, samples, and solution sparsity ratio.

The results are shown in Fig \ref{fig:hinge_loss}. To focus on the convergence property, we show the decrease of loss function instead of prediction accuracy. From Fig \ref{fig:hinge_loss}, our proposed algorithm consistently outperforms the benchmark algorithms. The winning margin is roughly proportional to the solution sparsity ratio, which is consistent with our theory.


\begin{table}[]
\begin{center}
\begin{tabular}{|c|c|c|c|c|c|}
\hline
Dataset Name       & \# Features & \# Samples  & \# Non-Zero & Solution Sparsity (Ratio) \\ \hline
duke breast-cancer \cite{chang2011libsvm} 
& 7,129        & 44                   & 313,676           & 423 (5.9\%)         \\ \hline
rcv1 \cite{chang2011libsvm}               
& 47,236       & 20,242               & 1,498,952         & 1,169 (2.5\%)          \\ \hline
news20.binary  \cite{chang2011libsvm}     
& 1,355,191    & 19,996               & 9,097,916         & 1,365 (0.1\%)         \\ \hline
MNIST.RB 0 VS 9  \cite{chang2011libsvm, rahimi2008random}  
& 894,499      & 11,872               & 1,187,200         & 8,450 (0.9\%)         \\ \hline
ijcnn.RB  \cite{chang2011libsvm, rahimi2008random}  
& 58,699      & 49,990               & 14,997,000         & 715 (1.2\%)     \\ \hline
cob-rna.RB  \cite{chang2011libsvm, rahimi2008random}  
& 81,398      & 59,535               & 5,953,500         & 958 (1.2\%)     \\ \hline
\end{tabular}
\end{center}
\caption{Summary of the properties of the datasets.}
\label{table:data}
\end{table}

\subsection{Experiments with trace norm ball on synthetic data}
\label{sec:trace_norm_result}
For trace norm constraints, we also implemented our proposal Primal Dual Block Frank Wolfe to compare with some prior work, especially Block FW \cite{allen2017linear}. Since prior work were mostly implemented in Matlab to tackle trace norm projections, we therefore also use Matlab to show fair comparisons. We choose quadratic loss $f(AX)=\|AX-B\|_F^2$ and $g$ to be $\ell_2$ regularizer with $\mu=10/n$. The synthetic sensing matrix $A\in\R^{n\times d}$ is dense with $n=1000$ and $d=800$. Our observation $B$ is of dimension $1000\times 600$ and is generated by a ground truth matrix $X_0$ such that $B=AX_0$. Here $X_0\in \R^{800\times 600}$ is constructed with low rank structure. We vary its rank $s$ to be $10, 20,$ and $100$. The comparisons with stochastic FW, blockFW \cite{allen2017linear}, STORC \cite{hazan2016variance}, SCGS \cite{lan2016conditional}, and projected SVRG \cite{rie2013acceler} are presented in Figure \ref{fig:tracenorm}, which verifies that our proposal PDBFW consistently outperforms the baseline algorithms.  

\begin{figure*}
\begin{center}
\includegraphics[width=0.34\linewidth]{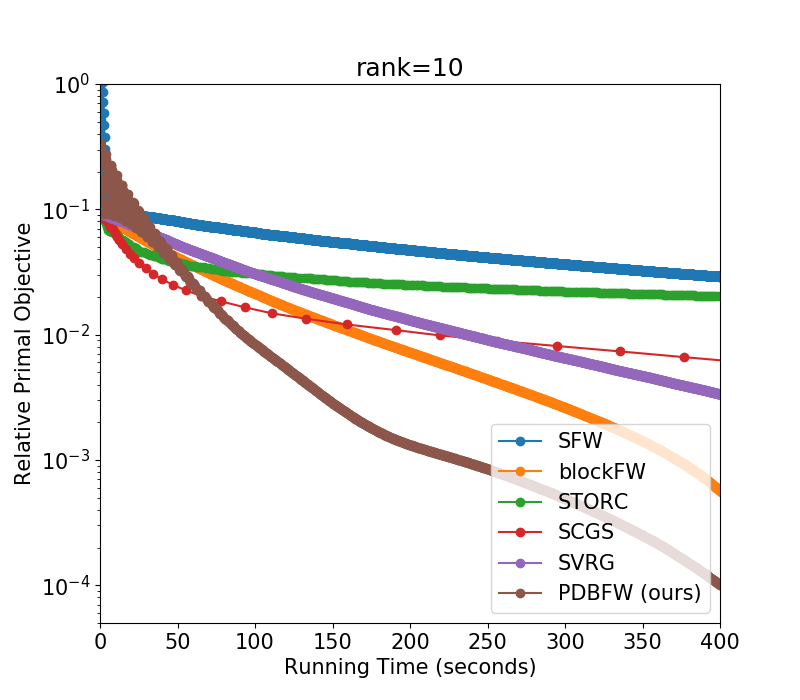}
\hspace{-15pt}
\includegraphics[width=0.34\linewidth]{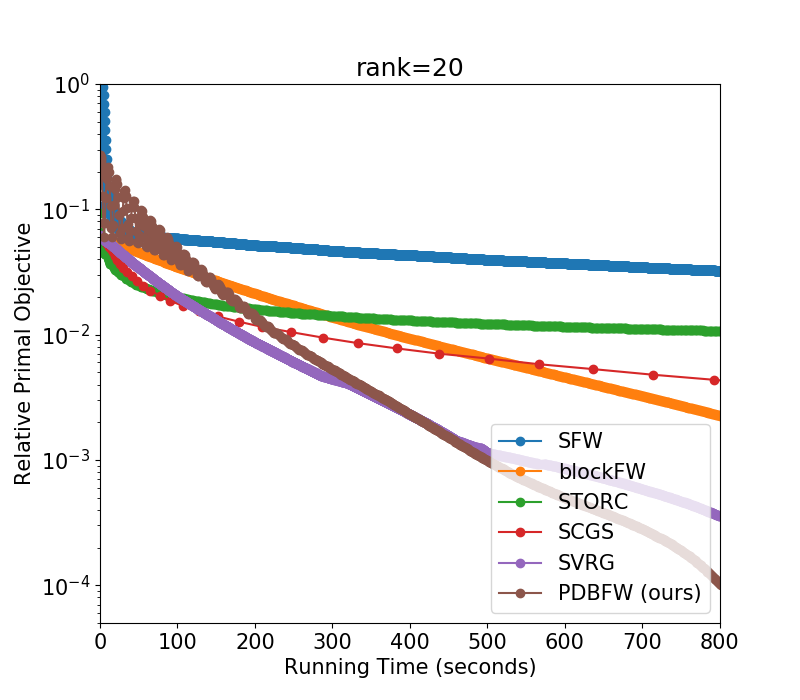}
\hspace{-15pt}
\includegraphics[width=0.34\linewidth]{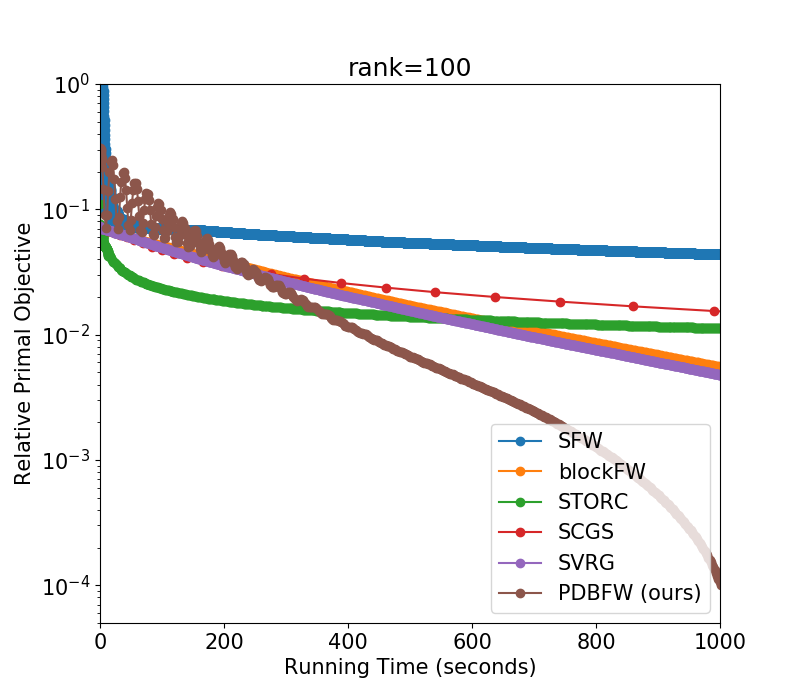}
\end{center}
\caption{Convergence comparison of our Primal Dual Block Frank Wolfe and other baselines. Figures show the relative primal objective value decreases with the wall time.}
\label{fig:tracenorm} 
\end{figure*}

\section{Conclusion}
In this paper we consider a class of problems whose solutions enjoy some simple structure induced by the constraints. We propose a FW type algorithm to exploit the simple structure and conduct partial updates, reducing the time cost for each update remarkably while attaining linear convergence.  For a class of ERM problems, our running time depends on the sparsity/rank of the optimal solutions rather than the ambient feature dimension. 
Our empirical studies verify the improved performance compared to various state-of-the-art algorithms.

\bibliography{main}
\bibliographystyle{plain}

\appendix

\section{Efficient Coordinate Selections}
\label{sec:coordinate_selection}
The modified Block Frank-Wolfe step in Eqn. \eqref{eqn:block_frank_wolfe} achieves an $s$-sparse update of the iterates and could be computed efficiently when one knows which $s$ coordinates to update. However, in order to find the $s$ coordinates, one needs to compute the full gradient $\nabla f(\bx)$ with naive implementation. This phenomenon reminds us of greedy coordinate descent.

Even with the known fact that coordinate descent converges faster with greedy selection than with random order\cite{nutini2015coordinate}, there have been hardness to propogate this idea because of expensive greedy selections since the arguments that GCD converges similarly with RCD in \citep{nesterov2012efficiency}, except for special cases \cite{lei2016coordinate,lei2017doubly,dhillon2011nearest,karimireddy2018efficient}. This is also probability why the partial updates nature of FW steps is less exploited before.

We investigate some possible tricks to boost GCD method that could be possibly applied to FW methods. A recent paper \citep{karimireddy2018efficient}, Karimireddy et al. make connections between the efficient choice of the greedy coordinates with the problem of Maximum Inner Product Search (MIPS) for a composite function $P(\bx)=f(A\bx)+g(\bx)$, where $A\in \R^{n\times d}$. We rephrase the connection for the Frank-Wolfe algorithm. Since the computation of gradient is essentially $A^\top \nabla f_{|A\bx} + \nabla g(\bx)$, to find its largest magnitude is to search maximum inner products among:
$$ \pm \langle [\tilde{\ba}_i^\top|1] , [\nabla f_{|A\bx}^\top |\nabla_i g(\bx)]\rangle, \text{i.e.} \pm\left(\tilde{\ba}_i^\top \nabla f_{|A\bx} + \nabla_i g(\bx)\right), $$ 
where $\tilde{\ba}_i\in \R^n$ is the $i$-th column of data matrix $A$, and $\nabla f_{|A\bx}$ is the gradient of $f$ at $A\bx$. In this way, we are able to select the greedy coordinates by conducting MIPS for a fixed $\R^{2d\times (n+1)}$ matrix $[A^\top|I| -A^\top|-I]^\top$ and each newly generated vector $[\nabla f_{|A\bx}^\top|\nabla g_i(\bx)]$. Therefore when $\nabla g_i$ is constant for linear function or $\pm \lambda$ for $g(\bx)=\lambda\|\bx\|_1$, we could find the largest magnitude of the gradient in sublinear time. Still, the problems it could conquer is very limited. It doesn't even work for $\ell_2$ regularizer since the different coordinates in $\nabla_i g(\bx)$ creates $d$ new vectors in each iteration and traditional MIPS could resolve it in time sublinear to $d$. Meanwhile, even with constant $\nabla_i g(\bx)$, it still requires at least $\Ocal((2d)^c\log(d))$ times of inner products of dimension $n+1$ for some constant $c$ \cite{shrivastava2014asymmetric}. 

However, we have shown that for general composite form $ f(A \bx)+g(\bx)$ with much more relaxed requirements on the regularizer $g$, we are able to select and update each coordinate with {\em constant} times of inner products on average while achieving linear convergence. Therefore the usage of these tricks applied on FW method (MIPS as well as the nearest neighbor search \cite{dhillon2011nearest}) is completely dominated by our contribution and we omit them in the main text of this paper. 
\newpage
\section{Proofs}
\subsection{Derivation of Primal-Dual Formulation} 
\begin{eqnarray*}
 &&    \min_{\bx\in C} P(\bx) = \frac{1}{n}\sum_{i=1}^nf_i(\ba_i^\top\bx) + g(\bx) \\
 &=&  \min_{\bx\in C, \bb=A\bx} \frac{1}{n}\sum_{i=1}^nf_i(b_i) + g(\bx)\\
    & = & \min_{\bx\in C, \bb }  \max_{\by }\left\{  \frac{1}{n}\sum_{i=1}^nf_i(b_i) + g(\bx) + \frac{1}{n}\langle \by,A\bx-\bb \rangle \right\} \\
&=  & \min_{\bx\in C} \max_{\by}\left\{ g(\bx) + \frac{1}{n}\langle \by, A\bx \rangle  + \min_{\bb}\left\{ \frac{1}{n}\sum_{i=1}^nf_i(b_i) - \frac{1}{n}\langle \by,  \bb \rangle \right\}\right\} \\
&=& \min_{\bx\in C } \max_{\by }\left\{\La(\bx,\by):= g(\bx) + \frac{1}{n}\langle \by, A\bx\rangle - \frac{1}{n}\sum_{i=1}^n f^*_i(y_i) \right\} \\
&=&  \max_{\by }\left\{ D(\by):= \min_{\bx\in C } \left\{g(\bx) + \frac{1}{n}\langle \by, A\bx\rangle - \frac{1}{n}\sum_{i=1}^n f^*_i(y_i)\right\} \right\} 
\end{eqnarray*}
We use Von Neumann-Fan minimax theorem for the whole derivation when swapping each min-max formula \cite{du2013minimax}. 
For the last equality, there is a convex constraint in the minimization part. Although the original Von Neumann-Fan doesn't have constraints, it naturally applies to the case when $\bx$ (assuming function is convex to $\bx$) is bounded in a convex set, since we could change $f(\bx,\by)$ to $f(\bx,\by) + I_C(\bx)$, where $I_C(\bx)=0$ if $\bx\in C$ and $\infty$ otherwise. Then the property will be properly inherited.

\subsection{Notation and simple facts}
Recall primal, dual and Lagrangian forms:
\begin{eqnarray*}
P(\bx) &\define& \frac{1}{n}\sum_{i=1}^n f_i(\ba_i^\top\bx) + g(\bx) \\
\La(\bx,\by) &\define& g(\bx) + \frac{1}{n}\by^\top A \bx -\frac{1}{n}\sum_{i=1}^nf^*_i(y_i) \\
D(\by) &\define& \min_{\bx\in C} \La(\bx,\by) = \La(\bar{\bx}(\by),\by)
\end{eqnarray*}

Similar to the definitions in \cite{lei2017doubly}, we introduce the primal gap defined as $\Delta^{(t)}_p\define \La(\bx^{(t+1)},\by^{(t)})-D(\by^{(t)})$, and dual gap $\Delta^{(t)}_d\define D^*-D(\by^{(t)})$. Recall the assumptions:
\begin{itemize}
    \item $f_i$ is convex and $\beta$-smooth, and is $\alpha$ strongly convex over some convex set, and linear otherwise.
    \item $R=\max_i\|\ba_i\|_2^2, \forall i\in [n]$.
    \item $g$ is $\mu$-strongly convex and $L$-smooth. 
\end{itemize}
To begin with, it is easy to verify that $f_i^*$ is $1/\beta$-strongly convex and is $1/\alpha$-smooth on a convex set and infinity otherwise (See Claim \ref{claim:indicator}).
For simplicity we first assume $\alpha\geq \frac{1}{2}\beta$ and then generalize the result. 
\begin{claim}
\begin{itemize} 
\item Since $D(\by) = \min_{\bx\in C} \{g(\bx) + \frac{1}{n}\by^\top A \bx\} -\frac{1}{n}\sum_{i=1}^nf^*_i(\by)$, $-D(\by)$ is at least $\frac{1}{\beta}$-strongly convex. 
\item Based on our update rule, $\exists \bg \in \partial_{\by}\frac{1}{n}\sum_{i} f_i^*(\by^{(t)})$, such that 
\begin{equation}
    \label{eqn:update_rule_g}
\by^{(t)}_{I^{(t)}}-\by^{(t-1)}_{I^{(t)}} = \delta (\frac{1}{n}A_{I^{(t)},:}\bx^{(t)}-\bg_{I^{(t)}}).
\end{equation}
\end{itemize} 
And our update rule ensures that $I^{(t)}$ consists of indices $i\in [n]$ that maximizes $|\frac{1}{n}\ba_i^\top \bx^{(t)}-g_i|$.
\end{claim}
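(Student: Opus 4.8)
The plan is to establish the two bullets and the closing sentence in turn, each by elementary convex analysis, reusing the smoothness/strong-convexity duality recorded just before the claim. For the first bullet, I would decompose $-D(\by) = q(\by) + \frac1n\sum_{i=1}^n f_i^*(y_i)$ with $q(\by) \define -\min_{\bx\in C}\{g(\bx)+\frac1n\by^\top A\bx\}$. For each fixed $\bx$ the map $\by\mapsto g(\bx)+\frac1n\by^\top A\bx$ is affine, so its pointwise minimum over $\bx\in C$ is concave in $\by$, hence $q$ is convex. The second summand is separable, and since each $f_i$ is $\beta$-smooth, each $f_i^*$ is $\frac1\beta$-strongly convex in the extended-valued sense — exactly the fact recorded before the claim and proved in Claim~\ref{claim:indicator}. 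Strong convexity is inherited by separable sums, and adding the finite convex $q$ preserves it, which gives the asserted strong convexity of $-D$ (carrying the $\frac1n$ normalization through yields modulus $\frac1{n\beta}$). The one point needing care is that $f_i^*$ equals $+\infty$ off a convex set, so ``$\frac1\beta$-strongly convex'' here means ``$f_i^* - \frac1{2\beta}(\cdot)^2$ is convex on the whole line'', a property that is still additive across coordinates and still stable under adding $q$.

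For the second bullet, observe that the inner problem \eqref{dual_search} maximizes $\by'\mapsto \frac1n\langle\bw^{(t)},\by'\rangle - \frac1n\sum_i f_i^*(y_i') - \frac1{2\delta}\|\by'-\by^{(t-1)}\|^2$, whose objective is strongly concave, so $\tilde\by$ is its unique maximizer and lies in $\mathrm{dom}(f^*)$. Writing the first-order optimality condition with the subdifferential sum rule — legitimate because the linear and quadratic pieces are finite and smooth everywhere — gives $0 \in \frac1n\bw^{(t)} - \partial_{\by}\bigl(\frac1n\sum_i f_i^*\bigr)(\tilde\by) - \frac1\delta(\tilde\by - \by^{(t-1)})$. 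The bookkeeping invariant $\bw^{(t)} = A\bx^{(t)}$ follows by an immediate induction from $\bw^{(0)} = 0 = A\bx^{(0)}$ using \eqref{eqn:update_w}, so there is $\tilde\bg \in \partial_{\by}\frac1n\sum_i f_i^*(\tilde\by)$ with $\tilde\by - \by^{(t-1)} = \delta\bigl(\frac1n A\bx^{(t)} - \tilde\bg\bigr)$ as a vector identity. Restricting to $I^{(t)}$: the coordinate-update rule gives $\by^{(t)}_{I^{(t)}} = \tilde\by_{I^{(t)}}$, and separability of $\frac1n\sum_i f_i^*$ gives $\partial_{\by}\frac1n\sum_i f_i^*(\by) = \prod_i \frac1n\partial f_i^*(y_i)$, hence $\tilde g_i \in \frac1n\partial f_i^*(y_i^{(t)})$ for $i\in I^{(t)}$. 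Completing $(\tilde g_i)_{i\in I^{(t)}}$ to a full vector $\bg$ by picking, for each $i\notin I^{(t)}$ (where $y_i^{(t)} = y_i^{(t-1)}$), any element of $\frac1n\partial f_i^*(y_i^{(t)})$ produces a $\bg \in \partial_{\by}\frac1n\sum_i f_i^*(\by^{(t)})$ agreeing with $\tilde\bg$ on $I^{(t)}$, which is precisely \eqref{eqn:update_rule_g}.

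For the closing sentence, since $\tilde\by - \by^{(t-1)} = \delta\bigl(\frac1n A\bx^{(t)} - \tilde\bg\bigr)$ holds coordinatewise, we have $|\tilde y_i - y_i^{(t-1)}| = \delta\,\bigl|\frac1n\ba_i^\top\bx^{(t)} - \tilde g_i\bigr|$ for every $i$; the algorithm selects $I^{(t)}$ as the $k$ indices maximizing $|\tilde y_i - y_i^{(t-1)}|$ (the GS-r rule), and on those indices $\tilde g_i = g_i$, so $I^{(t)}$ is exactly the top-$k$ set for $\bigl|\frac1n\ba_i^\top\bx^{(t)} - g_i\bigr|$ with respect to the consistently chosen subgradient $\bg$. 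I expect the only genuine obstacle to be the subgradient bookkeeping in the nonsmooth regime — ensuring the optimality condition, the sum rule, and the transfer of a subgradient from $\tilde\by$ to $\by^{(t)}$ on the active coordinates are all justified when $f_i^*$ is extended-valued and $\partial f_i^*$ may be empty at some boundary points — rather than anything deep; once the duality fact and separability are set up, both bullets reduce to bookkeeping.
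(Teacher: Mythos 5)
The paper states this claim without a proof, so there is nothing to compare line by line; your reconstruction of the two bullets is the argument the authors evidently intend, and it is sound. The first bullet combines convexity of $\by\mapsto-\min_{\bx\in C}\{g(\bx)+\frac1n\by^\top A\bx\}$ (a pointwise supremum of affine functions) with the $\frac1\beta$-strong convexity of each $f_i^*$ from Claim~\ref{claim:indicator}; your observation that the $\frac1n$ normalization makes the honest modulus $\frac{1}{n\beta}$ rather than the stated $\frac1\beta$ is correct and worth flagging, since that constant is consumed downstream in \eqref{eqn:delta_d_bound}. The second bullet is, as you say, the first-order optimality condition of the prox step \eqref{dual_search} together with the invariant $\bw^{(t)}=A\bx^{(t)}$ and separability, restricted to $I^{(t)}$ and completed off $I^{(t)}$ by any subgradient at $y_i^{(t)}=y_i^{(t-1)}$ (nonempty for the conjugates produced by Assumption~\ref{assump:main}, by Claim~\ref{claim:indicator}).

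The closing sentence is where your argument has a genuine gap, and it is not mere bookkeeping. Optimality of \eqref{dual_search} gives $\tilde y_i-y_i^{(t-1)}=\delta(\frac1n\ba_i^\top\bx^{(t)}-\tilde g_i)$ with $\tilde g_i\in\frac1n\partial f_i^*(\tilde y_i)$ for \emph{every} $i$, so the GS-r rule ranks coordinates by $|\frac1n\ba_i^\top\bx^{(t)}-\tilde g_i|$, a quantity built from subgradients at the prox point $\tilde y_i$. The claim needs the ranking with respect to $\bg\in\partial_{\by}\frac1n\sum_i f_i^*(\by^{(t)})$, and $y_i^{(t)}=\tilde y_i$ only for $i\in I^{(t)}$; for $i\notin I^{(t)}$ you are forced to take $g_i\in\frac1n\partial f_i^*(y_i^{(t-1)})$, and monotonicity of $\partial f_i^*$ then gives $|\frac1n\ba_i^\top\bx^{(t)}-g_i|\geq|\frac1n\ba_i^\top\bx^{(t)}-\tilde g_i|$ for any admissible choice, i.e.\ the substitution can only inflate the excluded coordinates. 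Concretely, for $f_i^*(y)=\frac12y^2$ one computes $\frac1n\ba_i^\top\bx^{(t)}-\tilde g_i=\frac1\delta(\tilde y_i-y_i^{(t-1)})$ but $\frac1n\ba_i^\top\bx^{(t)}-g_i=(\frac1\delta+\frac1n)(\tilde y_i-y_i^{(t-1)})$, so the two rankings are rescaled by different factors on and off $I^{(t)}$, and an excluded coordinate whose displacement is within a factor $1+\delta/n$ of the $k$-th largest violates the asserted top-$k$ property. Your step ``on those indices $\tilde g_i=g_i$, so $I^{(t)}$ is exactly the top-$k$ set with respect to $\bg$'' is therefore a non sequitur: agreement on $I^{(t)}$ says nothing about the competitors outside it. This is ultimately a defect of the claim as stated (it is exactly what the first inequality in the proof of Lemma~\ref{lemma:dual_progress} silently invokes); a correct treatment must either compare $I^{(t)}$ against $\bar I$ through the $\tilde{\bg}$-quantities and then pay the monotonicity factor explicitly, or restate the selection rule in terms of subgradients at $\by^{(t)}$.
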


\subsection{Primal Progress} 

\begin{lemma}(Primal Progress)
\label{lemma:primal_progress}
\begin{eqnarray*}
\La(\bx^{(t+1)}, \by^{(t)}) - \La(\bar{\bx}^{(t)}, \by^{(t)} ) \leq (1-\frac{\eta}{2}) \left( \La(\bx^{(t)}, \by^{(t)}) - \La(\bar{\bx}^{(t)}, \by^{(t)} ) \right)
\end{eqnarray*}
Or equivalently, 
$$(1-\frac{\eta}{2})(\La(\bx^{(t+1)}, \by^{(t)})-\La(\bx^{(t)}, \by^{(t)})) \leq -\frac{\eta}{2} \left( \La(\bx^{(t+1)}, \by^{(t)}) - \La(\bar{\bx}^{(t)}, \by^{(t)} ) \right) \equiv -\frac{\eta}{2} \Delta^{(t)}_p $$
\begin{proof}
Simply replace $h_t$ as $\La(\bx^{(t)},\by^{(t)})-D(\by^{(t)})$ and $h_{t+1}$ as $\La(\bx^{(t+1)},\by^{(t)})-D(\by^{(t)})$ in Inequality \eqref{eqn:fw_progress}. We could conclude that $h_{t+1}\leq (1-\eta +\eta^2 \frac{L}{\mu})h_t$. Therefore when $\eta\leq \frac{\mu}{2L}$, $h_{t+1}\leq (1-\frac{\eta}{2})h_t$ and the first part of Lemma \ref{lemma:primal_progress} is true. Some simple rearrangement suffices the second part of the lemma. 
\end{proof}

\end{lemma}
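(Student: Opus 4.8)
The plan is to reduce the statement to the three-line Frank--Wolfe contraction \eqref{eqn:fw_progress} already proved in Section~\ref{sec:vignette}, now applied to the partial minimization of $\La(\cdot,\by^{(t)})$ over the sparse $\ell_1$ ball. First I would freeze the dual iterate $\by^{(t)}$ and observe that $\bx\mapsto\La(\bx,\by^{(t)})=g(\bx)+\tfrac1n\langle\by^{(t)},A\bx\rangle-\tfrac1n\sum_i f_i^*(y_i^{(t)})$ differs from $g$ only by a term affine in $\bx$, so by Assumption~\ref{assump:main} it is $\mu$-strongly convex and $L$-smooth in $\bx$; moreover $\nabla_\bx\La(\bx^{(t)},\by^{(t)})=\nabla g(\bx^{(t)})+\tfrac1n A^\top\by^{(t)}=\nabla g(\bx^{(t)})+\tfrac1n\bz^{(t)}$ is exactly the linear coefficient appearing in the block-FW subproblem \eqref{eqn:deltax}. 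Hence $\tilde{\bx}$ is the exact minimizer, over $\{\|\bx\|_1\le\lambda,\ \|\bx\|_0\le s\}$, of the quadratic surrogate $\langle\nabla_\bx\La(\bx^{(t)},\by^{(t)}),\bx\rangle+\tfrac L2\eta\|\bx-\bx^{(t)}\|^2$, and $\bx^{(t+1)}=(1-\eta)\bx^{(t)}+\eta\tilde{\bx}$, i.e.\ the exact situation of \eqref{eqn:block_frank_wolfe}.

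Next I would replay the computation of \eqref{eqn:fw_progress} verbatim with $f$ replaced by $\La(\cdot,\by^{(t)})$, the previous iterate replaced by $\bx^{(t)}$, and the comparison point $\bx^*$ replaced by $\bar{\bx}^{(t)}$. $L$-smoothness gives $\La(\bx^{(t+1)},\by^{(t)})\le\La(\bx^{(t)},\by^{(t)})+\eta\langle\nabla_\bx\La(\bx^{(t)},\by^{(t)}),\tilde{\bx}-\bx^{(t)}\rangle+\tfrac L2\eta^2\|\tilde{\bx}-\bx^{(t)}\|^2$; optimality of $\tilde{\bx}$ in \eqref{eqn:deltax} lets me swap $\tilde{\bx}$ for $\bar{\bx}^{(t)}$ inside the bracket (this requires $\bar{\bx}^{(t)}$ to be admissible, i.e.\ $\bar{\bx}^{(t)}\in C$ and $\|\bar{\bx}^{(t)}\|_0\le s$, which is the sparsity hypothesis on the primal optimum); convexity bounds $\langle\nabla_\bx\La(\bx^{(t)},\by^{(t)}),\bar{\bx}^{(t)}-\bx^{(t)}\rangle\le\La(\bar{\bx}^{(t)},\by^{(t)})-\La(\bx^{(t)},\by^{(t)})=-h_t$ where $h_t\define\La(\bx^{(t)},\by^{(t)})-D(\by^{(t)})$; and $\mu$-strong convexity together with first-order optimality of $\bar{\bx}^{(t)}$ over $C$ gives $\tfrac\mu2\|\bar{\bx}^{(t)}-\bx^{(t)}\|^2\le h_t$. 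Collecting terms yields $h_{t+1}\le(1-\eta+\tfrac L\mu\eta^2)h_t$, and $\eta=\tfrac\mu{2L}$ forces $\tfrac L\mu\eta^2\le\tfrac\eta2$, hence $h_{t+1}\le(1-\tfrac\eta2)h_t$, which is the first displayed inequality. The equivalent form is pure bookkeeping: since $\Delta_p^{(t)}=h_{t+1}$ and $\La(\bx^{(t+1)},\by^{(t)})-\La(\bx^{(t)},\by^{(t)})=h_{t+1}-h_t$, multiplying $h_{t+1}\le(1-\tfrac\eta2)h_t$ by $(1-\tfrac\eta2)$ and rearranging gives $(1-\tfrac\eta2)(h_{t+1}-h_t)\le(1-\tfrac\eta2)h_{t+1}-h_{t+1}=-\tfrac\eta2\Delta_p^{(t)}$.

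I do not expect a genuine obstacle here: the entire content is recognizing that freezing $\by$ leaves a smooth, strongly convex constrained problem to which the Section~\ref{sec:vignette} estimate transfers line for line. The two spots that need care are (i) the strong-convexity step, where because $\bar{\bx}^{(t)}$ minimizes $\La(\cdot,\by^{(t)})$ only over $C$ rather than over $\R^d$, I must use the variational inequality $\langle\nabla_\bx\La(\bar{\bx}^{(t)},\by^{(t)}),\bx^{(t)}-\bar{\bx}^{(t)}\rangle\ge0$ in place of a vanishing gradient to obtain $\tfrac\mu2\|\bar{\bx}^{(t)}-\bx^{(t)}\|^2\le h_t$; and (ii) keeping $\bar{\bx}^{(t)}$ feasible in \eqref{eqn:deltax}, which is exactly why the theorem assumes $s$ upper-bounds the sparsity of $\bar{\bx}^{(t)}$.
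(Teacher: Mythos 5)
Your proposal is correct and follows essentially the same route as the paper: the paper's proof is literally "replay \eqref{eqn:fw_progress} with $f$ replaced by $\La(\cdot,\by^{(t)})$ and the optimum replaced by $\bar{\bx}^{(t)}$," which is exactly what you do, and your final rearrangement to the second displayed form matches the paper's "simple rearrangement." The two points of care you flag (the variational inequality at the constrained minimizer $\bar{\bx}^{(t)}$, and the feasibility of $\bar{\bx}^{(t)}$ in the sparse subproblem via the assumption on $s$) are exactly the details the paper leaves implicit, so your write-up is a faithful, slightly more complete version of the same argument.
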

\subsection{Primal Dual Progress}
In order to get a clue on how to analyze the dual progress, we first look at how the primal and dual evolve through iterations. \\
For an index set $I$ and a vector $\by\in \R^{n}$, denote $\by_{I}=\sum_{i\in I} y_i\be_i\in \R^k$ as the subarray of $\by$ indexed by $I$, with $|I|=k$. Recall Algorithm \ref{alg:PD_l1} selects the coordinates to update in the dual variable as $I^{(t)}$.
\begin{lemma}(Primal-Dual Progress).
	\label{lemma:primal_dual}
\begin{eqnarray*}
    \nonumber
    &&\Delta_d^{(t)}-\Delta_d^{(t-1)}+\Delta_p^{(t)}-\Delta_p^{(t-1)}\\
 &\leq&
 \La(\bx^{(t+1)},\by^{(t)})-\La(\bx^{(t)},\by^{(t)})-\frac{1}{2\delta}\|\by^{(t)}-\by^{(t-1)}\|^2\\
    &&+\frac{2\delta Rk}{n^2}\|\bar{\bx}^{(t)}-\bx^{(t)}\|^2.
    \end{eqnarray*}
\end{lemma}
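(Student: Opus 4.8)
The plan is to start from the definitions, regroup the four terms on the left so that the primal-progress quantity $\La(\bx^{(t+1)},\by^{(t)})-\La(\bx^{(t)},\by^{(t)})$ appears explicitly, and reduce everything else to a statement about the single dual step. Using $\Delta_p^{(t-1)}=\La(\bx^{(t)},\by^{(t-1)})-D(\by^{(t-1)})$ and $\Delta_d^{(t)}-\Delta_d^{(t-1)}=D(\by^{(t-1)})-D(\by^{(t)})$, the left-hand side rearranges exactly to
$$\big(\La(\bx^{(t+1)},\by^{(t)})-\La(\bx^{(t)},\by^{(t)})\big)+\underbrace{\big(\La(\bx^{(t)},\by^{(t)})-\La(\bx^{(t)},\by^{(t-1)})\big)}_{T_1}+2\underbrace{\big(D(\by^{(t-1)})-D(\by^{(t)})\big)}_{T_2}.$$
So it suffices to prove $T_1+2T_2\le-\tfrac{1}{2\delta}\|\by^{(t)}-\by^{(t-1)}\|^2+\tfrac{2\delta Rk}{n^2}\|\bar{\bx}^{(t)}-\bx^{(t)}\|^2$. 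Throughout I will write $\Delta\by:=\by^{(t)}-\by^{(t-1)}$, which by the algorithm is supported on $I^{(t)}$.

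For $T_2$ I would use that $\bar{\bx}^{(t)}$ is merely \emph{feasible} (hence suboptimal) for the dual point $\by^{(t-1)}$: since $D(\by^{(t-1)})=\min_{\bx\in C}\La(\bx,\by^{(t-1)})\le\La(\bar{\bx}^{(t)},\by^{(t-1)})$ while $D(\by^{(t)})=\La(\bar{\bx}^{(t)},\by^{(t)})$ holds with equality, we get $T_2\le\La(\bar{\bx}^{(t)},\by^{(t-1)})-\La(\bar{\bx}^{(t)},\by^{(t)})$. Expanding $T_1$ and this bound through $\La(\bx,\by)=g(\bx)+\tfrac1n\by^\top A\bx-\tfrac1n\sum_i f_i^*(y_i)$, the $g$-terms cancel (each difference keeps $\bx$ fixed), leaving
$$T_1+2T_2\;\le\;\tfrac1n\Delta\by^\top A\big(\bx^{(t)}-2\bar{\bx}^{(t)}\big)+\tfrac1n\textstyle\sum_i\big(f_i^*(y_i^{(t)})-f_i^*(y_i^{(t-1)})\big).$$
I would then bound the conjugate difference by plain convexity of $\tfrac1n\sum_i f_i^*$, crucially with the \emph{same} subgradient $\bg$ that appears in the optimality characterization of the dual update, Eq.~\eqref{eqn:update_rule_g}: $\tfrac1n\sum_i\big(f_i^*(y_i^{(t)})-f_i^*(y_i^{(t-1)})\big)\le\langle\bg,\Delta\by\rangle$. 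Substituting the update rule itself, $\bg_{I^{(t)}}=\tfrac1n A_{I^{(t)},:}\bx^{(t)}-\tfrac1\delta\Delta\by_{I^{(t)}}$, and using $\supp(\Delta\by)\subseteq I^{(t)}$, the inner products collapse and the bound becomes $\tfrac2n\big\langle A_{I^{(t)},:}(\bx^{(t)}-\bar{\bx}^{(t)}),\,\Delta\by_{I^{(t)}}\big\rangle-\tfrac1\delta\|\Delta\by\|^2$.

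Finally I would apply Young's inequality with weight $\delta$: $\tfrac2n\langle A_{I^{(t)},:}(\bx^{(t)}-\bar{\bx}^{(t)}),\Delta\by_{I^{(t)}}\rangle\le\tfrac{1}{2\delta}\|\Delta\by\|^2+\tfrac{2\delta}{n^2}\|A_{I^{(t)},:}(\bx^{(t)}-\bar{\bx}^{(t)})\|^2$. The first piece combines with $-\tfrac1\delta\|\Delta\by\|^2$ to leave exactly $-\tfrac1{2\delta}\|\Delta\by\|^2$, and the second is at most $\tfrac{2\delta Rk}{n^2}\|\bar{\bx}^{(t)}-\bx^{(t)}\|^2$ since $\|A_{I^{(t)},:}\bv\|^2=\sum_{i\in I^{(t)}}(\ba_i^\top\bv)^2\le|I^{(t)}|\,\max_i\|\ba_i\|^2\,\|\bv\|^2=kR\|\bv\|^2$. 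Adding the primal-progress term back to both sides reproduces the lemma.

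I expect the main obstacle to be bookkeeping rather than any sharp inequality, and this is precisely where care is required. One must choose the regrouping that isolates the primal-progress term while preserving the \emph{factor of $2$} on $T_2$: that factor is exactly what lets Young's inequality turn the $-\tfrac1\delta\|\Delta\by\|^2$ produced by the proximal dual step into the advertised $-\tfrac1{2\delta}\|\Delta\by\|^2$ after conceding the other half to the primal-hindrance term. One must also linearize $\sum_i f_i^*$ using the \emph{specific} subgradient furnished by the dual-update optimality condition, so that it cancels against the bilinear term through the update formula instead of producing an uncontrolled residual. A secondary point worth double-checking is the index convention ($\Delta_p^{(t)}$ looks one primal step ahead), so that $T_1$ is written with the correct iterates; strong convexity of $f_i^*$ is available but, as the computation above shows, is not needed for this lemma.
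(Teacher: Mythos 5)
Your proof is correct, and it reaches the stated bound by a genuinely different route on the dual side. The paper uses the same regrouping of the left-hand side into the primal-progress term plus $T_1+2T_2$, and the same final steps (Young's inequality with weight $\delta$, then $\|A_{I^{(t)},:}\bv\|^2\leq kR\|\bv\|^2$), but it controls $T_1$ and $T_2$ through curvature: $T_1\leq(\frac{1}{\delta}+\frac{1}{2\alpha})\|\Delta\by\|^2$ via $\frac{1}{\alpha}$-smoothness of $-\La(\bx^{(t)},\cdot)$, and $T_2\leq-\langle\frac{1}{n}A_{I^{(t)},:}(\bar{\bx}^{(t)}-\bx^{(t)}),\Delta\by_{I^{(t)}}\rangle-(\frac{1}{\delta}+\frac{1}{2\beta})\|\Delta\by\|^2$ via $\frac{1}{\beta}$-strong convexity of $-D$. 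Summing leaves a coefficient $-(\frac{1}{\delta}+\frac{1}{\beta}-\frac{1}{2\alpha})$ on $\|\Delta\by\|^2$, so the paper must invoke its standing simplification $\alpha\geq\beta/2$ to discard the residual $\frac{1}{\beta}-\frac{1}{2\alpha}\geq 0$ and arrive at $-\frac{1}{\delta}\|\Delta\by\|^2$ before Young's inequality; the case $\beta>2\alpha$ is then handled separately with a reweighted gap. Your argument replaces both curvature bounds by the suboptimality inequality $D(\by^{(t-1)})\leq\La(\bar{\bx}^{(t)},\by^{(t-1)})$ and plain convexity of $\frac{1}{n}\sum_i f_i^*$ at $\by^{(t)}$ with the update-rule subgradient, which produces $-\frac{1}{\delta}\|\Delta\by\|^2$ exactly and proves the lemma as stated without any relation between $\alpha$ and $\beta$ — a mild but real improvement in generality, and the cancellation of the conjugate terms against the bilinear term is cleaner. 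What the paper's sharper intermediate inequalities buy is that they are reused verbatim in the $\beta>2\alpha$ branch of the proof of Theorem \ref{thm:main} (the reweighted gap $(\frac{\beta}{\alpha}-1)\Delta_d^{(t)}+\Delta_p^{(t)}$), so your route would still need those $\alpha,\beta$-dependent estimates downstream even though the lemma itself does not.
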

\begin{proof}
Notice we have claimed that $-D(\by)$ is $\frac{1}{\beta}$-strongly convex and for all $\bg\in \partial_{\by} \frac{1}{n}\sum_{i}^n f^*_i(\by^{(t)})$, 
\begin{eqnarray}
\nonumber
&&\Delta_d^{(t)}-\Delta_d^{(t-1)} = \big(-D(\by^{(t)})\big)-\big(-D(\by^{(t-1)})\big)\\
\nonumber 
&\leq& \langle -\nabla_{\by} \La(\bar{\bx}^{(t)},\by^{(t)}), \by^{(t)}-\by^{(t-1)}\rangle-\frac{1}{2\beta}\|\by^{(t)}-\by^{(t-1)}\|^2 \\
\label{eqn:dual_gap}
&= &  -\langle \frac{1}{n} A_{I^{(t)},:}\bar{\bx}^{(t)}-\bg_{I^{(t)}}, \by^{(t)}_{I^{(t)}}-\by^{(t-1)}_{I^{(t)}}\rangle-\frac{1}{2\beta}\|\by^{(t)}-\by^{(t-1)}\|^2
\end{eqnarray}
Meanwhile since $-\La(\bx,\by)$ is $\frac{1}{\alpha}$-smooth over its feasible set, 
\begin{eqnarray}
\nonumber
&&\La(\bx^{(t)},\by^{(t)})-\La(\bx^{(t)},\by^{(t-1)})\\
\nonumber 
&=&-\La(\bx^{(t)},\by^{(t-1)})-(-\La(\bx^{(t)},\by^{(t)}))\\
\nonumber
&\leq& (\frac{1}{n}A_{I^{(t)},:}\bx^{(t)}-\bg_{I^{(t)}})^\top(\by^{(t)}_{I^{(t)}}-\by^{(t-1)}_{I^{(t)}})+\frac{1}{2\alpha}\|\by^{(t-1)}_{I^{(t)}}-\by^{(t)}_{I^{(t)}}\|^2\\
\label{eqn:update_y}
&=& (\frac{1}{\delta}+\frac{1}{2\alpha})\|\by^{(t)}-\by^{(t-1)}\|^2.
\end{eqnarray}

Also, with the update rule of dual variables, we could make use of Eqn. \eqref{eqn:update_rule_g} and re-write Eqn. \eqref{eqn:dual_gap} as:
\begin{eqnarray}
\nonumber
&&\Delta_d^{(t)}-\Delta_d^{(t-1)} \\
\nonumber
&\leq &  -\langle \frac{1}{n}A_{I^{(t)},:}\bar{\bx}^{(t)}-\bg_{I^{(t)}}, \by^{(t)}_{I^{(t)}}-\by^{(t-1)}_{I^{(t)}}\rangle - \frac{1}{\delta}\|\by^{(t)}-\by^{(t-1)}\|^2\\
\nonumber
&& + (\by^{(t)}-\by^{(t-1)})^\top (\frac{1}{n}A_{I^{(t)},:}\bx^{(t)}-\bg_{I^{(t)}}) - \frac{1}{2\beta}\|\by^{(t)}-\by^{(t-1)}\|^2\\
&=&-\langle \frac{1}{n}A_{I^{(t)},:}(\bar{\bx}^{(t)}-\bx^{(t)}), \by^{(t)}_{I^{(t)}}-\by^{(t-1)}_{I^{(t)}}\rangle- (\frac{1}{\delta}+\frac{1}{2\beta} )\|\by^{(t)}-\by^{(t-1)}\|^2
\label{eqn:delta_d}
\end{eqnarray}
Together we get:
\begin{align*}
    \nonumber
    & \Delta_d^{(t)}-\Delta_d^{(t-1)}+\Delta_p^{(t)}-\Delta_p^{(t-1)}\\
    = & \La(\bx^{(t+1)},\by^{(t)})-\La(\bx^{(t)},\by^{(t)}) + \La(\bx^{(t)},\by^{(t)})-\La(\bx^{(t)},\by^{(t-1)}) \\
    & +2(\Delta_d^{(t)}-\Delta_d^{(t-1)})\\
    \leq& \La(\bx^{(t+1)},\by^{(t)})-\La(\bx^{(t)},\by^{(t)}) 
    +(\frac{1}{\delta}+\frac{1}{2\alpha})\|\by^{(t-1)}_{I^{(t)}}-\by^{(t)}_{I^{(t)}}\|^2+2(\Delta_d^{(t)}-\Delta_d^{(t-1)}) \longrnote{(from Eqn. \eqref{eqn:update_y})} \\
\leq & \La(\bx^{(t+1)},\by^{(t)})-\La(\bx^{(t)},\by^{(t)}) +(\frac{1}{\delta}+\frac{1}{2\alpha})\|\by^{(t-1)}_{I^{(t)}}-\by^{(t)}_{I^{(t)}}\|^2\\
&-2\langle \frac{1}{n}A_{I^{(t)},:}(\bar{\bx}^{(t)}-\bx^{(t)}), \by^{(t)}_{I^{(t)}}-\by^{(t-1)}_{I^{(t)}}\rangle- 2(\frac{1}{\delta}+\frac{1}{2\beta} )\|\by^{(t)}-\by^{(t-1)}\|^2
\longrnote{(from Eqn. \eqref{eqn:delta_d})}\\
    =& \La(\bx^{(t+1)},\by^{(t)})-\La(\bx^{(t)},\by^{(t)}) -2\langle \frac{1}{n}A_{I^{(t)},:}(\bar{\bx}^{(t)}-\bx^{(t)}), \by^{(t)}_{I^{(t)}}-\by^{(t-1)}_{I^{(t)}}\rangle\\
    &- (\frac{1}{\delta}+\frac{1}{\beta}-\frac{1}{2\alpha})\|\by^{(t)}-\by^{(t-1)}\|^2\\
    \leq& \La(\bx^{(t+1)},\by^{(t)})-\La(\bx^{(t)},\by^{(t)}) + 2\delta \| \frac{1}{n}A_{I^{(t)},:}(\bar{\bx}^{(t)}-\bx^{(t)})\|^2 \\
    &-(\frac{1}{\delta} -\frac{1}{2\delta})\|\by^{(t)}-\by^{(t-1)}\|^2  \shortrnote{(since $2ab\leq \gamma a^2+1/\gamma b^2$)}\\
    \leq& \La(\bx^{(t+1)},\by^{(t)})-\La(\bx^{(t)},\by^{(t)})-\frac{1}{2\delta}\|\by^{(t)}-\by^{(t-1)}\|^2\\
    &+\frac{2\delta Rk}{n^2}\|\bar{\bx}^{(t)}-\bx^{(t)}\|^2
    \end{align*}
\end{proof}
Therefore we will connect the progress induced by $-\| \by^{(t)}-\by^{(t-1)}\|$ and dual gap $\Delta_d^{(t)}$ next. 

\subsection{Dual progress}
\begin{claim}
An $\alpha$-strongly convex function $f$ satisfies:
$$ f(\bx)-f^*\leq \frac{1}{2\alpha}\|\nabla f(\bx)\|_2^2 $$
\end{claim}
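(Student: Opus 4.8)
The plan is to obtain this Polyak--\L ojasiewicz-type inequality directly from the definition of $\alpha$-strong convexity recalled in the Setup section, by minimizing both sides of the defining quadratic lower bound. Here $f^*$ denotes the optimal value $\min_{\by} f(\by)$, which is attained (at a unique point) because $f$ is strongly convex.

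First I would instantiate the strong convexity inequality at the point $\bx$: for every $\by$,
$$ f(\by) \;\geq\; f(\bx) + \langle \nabla f(\bx), \by - \bx\rangle + \frac{\alpha}{2}\|\by - \bx\|_2^2 . $$
Then I would take the infimum over $\by$ on both sides. The left-hand side becomes $f^*$. The right-hand side is a strictly convex quadratic in $\by$; setting its gradient $\nabla f(\bx) + \alpha(\by-\bx)$ to zero (equivalently, completing the square) shows that its minimizer is $\by = \bx - \tfrac{1}{\alpha}\nabla f(\bx)$ and its minimum value is $f(\bx) - \tfrac{1}{2\alpha}\|\nabla f(\bx)\|_2^2$. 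Combining the two gives $f^* \geq f(\bx) - \tfrac{1}{2\alpha}\|\nabla f(\bx)\|_2^2$, and rearranging yields the claim. Equivalently, and perhaps more cleanly, one can simply substitute the particular choice $\by = \bx - \tfrac{1}{\alpha}\nabla f(\bx)$ into the displayed inequality to read off $f(\bx) - \tfrac{1}{2\alpha}\|\nabla f(\bx)\|_2^2 \geq f(\by) \geq f^*$.

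There is essentially no obstacle here; the only mild point of care is that the inequality must be applied with the very (sub)gradient $\nabla f(\bx)$ that appears in the claimed bound, which is exactly what the definition of $\alpha$-strong convexity provides (it holds for any subgradient $\bg \in \partial f(\bx)$, hence in particular for $\nabla f(\bx)$ when $f$ is differentiable, with $\|\bg\|_2$ in place of $\|\nabla f(\bx)\|_2$ in the nonsmooth case).
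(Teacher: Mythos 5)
Your proof is correct and follows essentially the same route as the paper's: both start from the $\alpha$-strong-convexity lower bound anchored at $\bx$ and extract $f(\bx)-f^*\leq \frac{1}{2\alpha}\|\nabla f(\bx)\|_2^2$ from it. The only cosmetic difference is that you minimize the quadratic lower bound over all $\by$ (equivalently, substitute $\by=\bx-\frac{1}{\alpha}\nabla f(\bx)$), whereas the paper substitutes $\by=\bar{\bx}$ and then applies Young's inequality $\langle \ba,\bb\rangle\leq\frac{1}{2\alpha}\|\ba\|^2+\frac{\alpha}{2}\|\bb\|^2$ to the cross term; both yield the identical bound.
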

This simply due to $f(\bx)-f^*\leq \langle \nabla f(\bx),\bx-\bar{\bx}\rangle-\frac{\alpha}{2}\|\bx-\bar{\bx}\|_2^2\leq \frac{1}{2\alpha}\|\nabla f(\bx)\|^2+\frac{\alpha}{2}\|\bx-\bar{\bx}\|^2-\frac{\alpha}{2}\|\bx-\bar{\bx}\|^2=\frac{1}{2\alpha}\|\nabla f(\bx)\|^2$. 

Since $-D$ is $\frac{1}{\beta}$-strongly convex, we get
\begin{align}
\nonumber 
    \Delta_d^{(t)}=D^*-D(\by^{(t)})\leq& \frac{\beta}{2} \|\nabla D(\by^{(t)})\|_2^2\\
    \nonumber 
    = &\frac{\beta}{2}\|\frac{1}{n}A\bar{\bx}^{(t)}- \bg \|^2_2\\
    \label{eqn:delta_d_bound}
    \leq& \frac{n\beta}{2k} \|\frac{1}{n}A_{\bar{I},:}\bar{\bx}^{(t)}-\bg_{\bar{I}}\|_2^2,
\end{align}
where $\bar{I}$ is a set of size $k$ that maximizes the values of $A_i^\top \bar{\bx}^{(t)}-g_i$. 

\begin{lemma}[Dual Progress]
\label{lemma:dual_progress}
\begin{equation*}
    -\|\by^{(t)} - \by^{(t-1)}\|^2
    \leq -\frac{k\delta}{n\beta}\Delta_d^{(t)} + \frac{k\delta}{n^2} R\|\bar{\bx}^{(t)}-\bx^{(t)}\|_2^2    
\end{equation*}
\end{lemma}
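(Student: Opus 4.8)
The plan is to lower-bound $\|\by^{(t)} - \by^{(t-1)}\|^2$ from below by connecting it, via the update rule, to the dual gap $\Delta_d^{(t)}$, then add a correction term that accounts for the fact that the algorithm uses the current primal iterate $\bx^{(t)}$ rather than the primal optimum $\bar{\bx}^{(t)}$ inside the dual step. First I would invoke the update rule \eqref{eqn:update_rule_g}: on the selected coordinate set $I^{(t)}$ we have $\by^{(t)}_{I^{(t)}} - \by^{(t-1)}_{I^{(t)}} = \delta(\frac{1}{n}A_{I^{(t)},:}\bx^{(t)} - \bg_{I^{(t)}})$, and off $I^{(t)}$ the dual variable does not change, so
$$\|\by^{(t)} - \by^{(t-1)}\|^2 = \delta^2 \left\| \frac{1}{n}A_{I^{(t)},:}\bx^{(t)} - \bg_{I^{(t)}} \right\|^2.$$
The crucial point is that $I^{(t)}$ is chosen by the {\bf GS-r} rule to be the $k$ coordinates maximizing $|\frac{1}{n}\ba_i^\top \bx^{(t)} - g_i|$, which is the same ordering (up to the primal point used) as the set $\bar I$ appearing in \eqref{eqn:delta_d_bound}. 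The main obstacle is reconciling these two sets: $\bar I$ maximizes coordinates of $\frac{1}{n}A\bar{\bx}^{(t)} - \bg$, while $I^{(t)}$ maximizes coordinates of $\frac{1}{n}A\bx^{(t)} - \bg$. Since $I^{(t)}$ is by definition the top-$k$ set for the vector $\frac{1}{n}A\bx^{(t)} - \bg$, it holds that $\|\frac{1}{n}A_{I^{(t)},:}\bx^{(t)} - \bg_{I^{(t)}}\|^2 \geq \|\frac{1}{n}A_{\bar I,:}\bx^{(t)} - \bg_{\bar I}\|^2$ for \emph{any} index set $\bar I$ of size $k$; in particular for the $\bar I$ of \eqref{eqn:delta_d_bound}.

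Next I would transfer from $\bx^{(t)}$ to $\bar{\bx}^{(t)}$ on the set $\bar I$ using the triangle inequality in the form $\|\bu\|^2 \geq \frac12\|\bv\|^2 - \|\bu - \bv\|^2$ (equivalently $\|\bv\|^2 \leq 2\|\bu\|^2 + 2\|\bu-\bv\|^2$), with $\bu = \frac{1}{n}A_{\bar I,:}\bx^{(t)} - \bg_{\bar I}$ and $\bv = \frac{1}{n}A_{\bar I,:}\bar{\bx}^{(t)} - \bg_{\bar I}$, so that $\bu - \bv = \frac{1}{n}A_{\bar I,:}(\bx^{(t)} - \bar{\bx}^{(t)})$. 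This yields
$$\left\| \tfrac{1}{n}A_{I^{(t)},:}\bx^{(t)} - \bg_{I^{(t)}} \right\|^2 \;\geq\; \tfrac12 \left\| \tfrac{1}{n}A_{\bar I,:}\bar{\bx}^{(t)} - \bg_{\bar I} \right\|^2 - \left\| \tfrac{1}{n}A_{\bar I,:}(\bx^{(t)} - \bar{\bx}^{(t)}) \right\|^2.$$
For the first term I substitute \eqref{eqn:delta_d_bound}, which gives $\|\frac{1}{n}A_{\bar I,:}\bar{\bx}^{(t)} - \bg_{\bar I}\|^2 \geq \frac{2k}{n\beta}\Delta_d^{(t)}$. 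For the second term, since $|\bar I| = k$ and $R$ bounds the relevant squared singular value (or $\max_i\|\ba_i\|^2 \leq R$ in the vector case, so a $k$-row submatrix has squared spectral norm at most $kR$ — actually each coordinate contributes $(\frac1n \ba_i^\top(\bx^{(t)}-\bar\bx^{(t)}))^2 \leq \frac{1}{n^2}\|\ba_i\|^2\|\bx^{(t)}-\bar\bx^{(t)}\|^2 \leq \frac{R}{n^2}\|\bx^{(t)}-\bar\bx^{(t)}\|^2$, and summing over $k$ coordinates gives the bound), we get $\|\frac{1}{n}A_{\bar I,:}(\bx^{(t)}-\bar{\bx}^{(t)})\|^2 \leq \frac{kR}{n^2}\|\bx^{(t)} - \bar{\bx}^{(t)}\|^2$.

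Putting the pieces together:
$$\|\by^{(t)} - \by^{(t-1)}\|^2 \;\geq\; \delta^2\left( \frac{k}{n\beta}\Delta_d^{(t)} - \frac{kR}{n^2}\|\bx^{(t)} - \bar{\bx}^{(t)}\|^2 \right),$$
and dividing by $\delta$ (noting $\delta \le 1$ may be needed, or the statement is stated with a single power of $\delta$ — I would check the claimed normalization; the lemma as stated has $\delta$ to the first power, so presumably $\delta^2 \ge \delta \cdot (\text{something})$ is not what is wanted, rather the lemma keeps one $\delta$ and there is an implicit $\delta \le 1$, or the bound $\|\by^{(t)}-\by^{(t-1)}\|^2 \ge \delta \cdot (\cdots)$ uses $\delta^2 \geq \delta$ when $\delta\le1$; I would state whichever matches) and negating gives exactly
$$-\|\by^{(t)} - \by^{(t-1)}\|^2 \;\leq\; -\frac{k\delta}{n\beta}\Delta_d^{(t)} + \frac{k\delta}{n^2}R\|\bar{\bx}^{(t)} - \bx^{(t)}\|^2.$$
I expect the only genuinely delicate point to be the bookkeeping that the {\bf GS-r} rule makes $I^{(t)}$ a \emph{maximizing} set for $\frac1n A\bx^{(t)} - \bg$, so that replacing it by the (sub-optimal for that vector) set $\bar I$ only decreases the norm — this is what lets the chain of inequalities go in the right direction — together with confirming the exact power of $\delta$ intended in the statement.
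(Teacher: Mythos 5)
Your proof follows essentially the same route as the paper's: use the \textbf{GS-r} optimality of $I^{(t)}$ to pass to the set $\bar I$, apply $\|\bu\|^2\geq\frac{1}{2}\|\bv\|^2-\|\bu-\bv\|^2$ to swap $\bx^{(t)}$ for $\bar{\bx}^{(t)}$, and then invoke \eqref{eqn:delta_d_bound} together with the per-row bound $\|\ba_i\|^2\leq R$. The $\delta$-versus-$\delta^2$ bookkeeping you flag is real, but the paper's own proof glosses over it in exactly the same way (it ends with ``with the relation between $\frac{1}{n}A_{I^{(t)}}^\top\bx^{(t)}-\bg_{I^{(t)}}$ and $\by^{(t)}-\by^{(t-1)}$ we finish the proof''), so this is an imprecision in the stated lemma rather than a gap in your argument.
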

\begin{proof}[Proof of Lemma \ref{lemma:dual_progress}]
Define $\Delta = \frac{1}{n}A(\bar{\bx}^{(t)}-\bx^{(t)})$. Since
\begin{align*}
    & -\|\frac{1}{n}A_{I^{(t)}}^\top \bx^{(t)} - \bg_{I^{(t)}}\|^2\\
    \leq & -\|\frac{1}{n}A_{\bar{I}}^\top \bx^{(t)} - \bg_{\bar{I}}\|^2 \shortrnote{(choice of $I^{(t)}$)} \\
    = & -\|\frac{1}{n}A_{\bar{I}}^\top \bar{\bx}^{(t)} - \bg_{\bar{I}} - \Delta_{\bar{I}}\|^2\\
    \leq & -\frac{1}{2}\|\frac{1}{n}A_{\bar{I}}^\top \bar{\bx}^{(t)} - \bg_{\bar{I}} \|^2 + \|\Delta_{\bar{I}}\|_2^2   \longrnote{(since $-(a+b)^2\leq -1/2a^2+b^2$)}\\
    \leq & -\frac{k}{n\beta}\Delta_d^{(t)} + \|\Delta_{\bar{I}}\|_2^2  \shortrnote{(from \eqref{eqn:delta_d_bound} )}\\
    \leq & -\frac{k}{n\beta}\Delta_d^{(t)} + \frac{k}{n^2} R\|\bar{\bx}^{(t)}-\bx^{(t)}\|_2^2
\end{align*}
With the relation between $\frac{1}{n}A_{I^{(t)}}^\top \bx^{(t)} - \bg_{I^{(t)}}$ and $\by^{(t)} - \by^{(t-1)}$ we finish the proof.
\end{proof}
\subsection{Convergence on Duality Gap}
Now we are able to merge the primal/dual progress to get the overall progress on the duality gap.
\begin{proof}[Proof of Theorem \ref{thm:main}]
We simply blend Lemma \ref{lemma:primal_progress} and Lemma \ref{lemma:dual_progress} with the primal-dual progress (Lemma \ref{lemma:primal_dual}):
\begin{align*}
    \nonumber
    & \Delta_d^{(t)}-\Delta_d^{(t-1)}+\Delta_p^{(t)}-\Delta_p^{(t-1)}\\
    \leq & \La(\bx^{(t+1)},\by^{(t)})-\La(\bx^{(t)},\by^{(t)})-\frac{1}{2\delta}\|\by^{(t)}-\by^{(t-1)}\|^2\\
    &+\frac{2\delta Rk}{n^2}\|\bar{\bx}^{(t)}-\bx^{(t)}\|^2 \shortrnote{(Lemma \ref{lemma:primal_dual})}\\
    \leq & \La(\bx^{(t+1)},\by^{(t)})-\La(\bx^{(t)},\by^{(t)})+\frac{\delta}{2} ( -\frac{k}{n\beta}\Delta_d^{(t)} + \frac{k}{n^2} R\|\bar{\bx}^{(t)}-\bx^{(t)}\|_2^2) \\
    &+\frac{2\delta Rk}{n^2}\|\bar{\bx}^{(t)}-\bx^{(t)}\|^2 \shortrnote{(Lemma \ref{lemma:dual_progress})}\\
    =&\La(\bx^{(t+1)},\by^{(t)})-\La(\bx^{(t)},\by^{(t)}) - \frac{k\delta}{2n\beta} \Delta_d^{(t)}+ \frac{5R\delta k}{2n^2}\|\bar{\bx}^{(t)}-\bx^{(t)}\|_2^2\\
    \leq & \La(\bx^{(t+1)},\by^{(t)})-\La(\bx^{(t)},\by^{(t)}) - \frac{k\delta}{2n\beta} \Delta_d^{(t)}+ \frac{5R\delta k}{\mu n^2}(\La(\bx^{ (t)} , \by^{ (t)} ) - \La(\bar{\bx}^{ (t)} , \by^{ (t)} ))  \\
    =&(1-\frac{5R\delta k}{\mu n^2}) (\La(\bx^{(t+1)},\by^{(t)})-\La(\bx^{(t)},\by^{(t)})) - \frac{k\delta}{2n\beta} \Delta_d^{(t)} \\
    &+ \frac{5R\delta k}{\mu n^2}(\La(\bx^{(t+1)},\by^{(t)})-\La(\bar{\bx}^{(t)},\by^{(t)}) ) \\
    \leq & - \frac{k\delta}{2n\beta} \Delta_d^{(t)} - \left((1-\frac{5R\delta k}{\mu n^2})\frac{\mu}{4L} - \frac{5R\delta k}{\mu n^2} \right)\Delta_p^{(t)} \shortrnote{(Lemma \ref{lemma:primal_progress})}
\end{align*} 
When setting $\frac{k\delta}{2n\beta}=(1-\frac{5R\delta k}{\mu n^2})\frac{\mu}{4L} - \frac{5R\delta k}{\mu n^2}$, we get that 
$\Delta^{(t)}\leq \frac{1}{1+a}\Delta^{(t-1)}$, where $1/a=\Ocal(\frac{L}{\mu}(1+\frac{R\beta}{n\mu}))$. Therefore it takes $\Ocal(\frac{L}{\mu}(1+\frac{R\beta}{n\mu})\log\frac{1}{\epsilon})$ for $\Delta^{(t)}$ to reach $\epsilon$.


When $\beta>2\alpha$, we could redefine the primal-dual process as $\Delta^{(t)}:=(\frac{\beta}{\alpha}-1)\Delta^{(t)}_d+\Delta^{(t)}_p$ and 
rewrite some of the key steps, especially for the overall primal-dual progress. 

\begin{align*}
    \nonumber
    & \Delta^{(t)}-\Delta^{(t-1)}\\
    = &(\frac{\beta}{\alpha}-1)(\Delta_d^{(t)}-\Delta_d^{(t-1)})+\Delta_p^{(t)}-\Delta_p^{(t-1)}\\
    = & \La(\bx^{(t+1)},\by^{(t)})-\La(\bx^{(t)},\by^{(t)}) + \La(\bx^{(t)},\by^{(t)})-\La(\bx^{(t)},\by^{(t-1)}) \\
    & +\frac{\beta}{\alpha}(\Delta_d^{(t)}-\Delta_d^{(t-1)})\\
\leq & \La(\bx^{(t+1)},\by^{(t)})-\La(\bx^{(t)},\by^{(t)}) +(\frac{1}{\delta}+\frac{1}{2\alpha})\|\by^{(t-1)}_{I^{(t)}}-\by^{(t)}_{I^{(t)}}\|^2\\
&-\frac{\beta}{\alpha}\langle \frac{1}{n}A_{I^{(t)},:}(\bar{\bx}^{(t)}-\bx^{(t)}), \by^{(t)}_{I^{(t)}}-\by^{(t-1)}_{I^{(t)}}\rangle- \frac{\beta}{\alpha}(\frac{1}{\delta}+\frac{1}{2\beta} )\|\by^{(t)}-\by^{(t-1)}\|^2
\longrnote{(from Eqn. \eqref{eqn:update_y} and \eqref{eqn:delta_d})}\\
    =& \La(\bx^{(t+1)},\by^{(t)})-\La(\bx^{(t)},\by^{(t)}) -\frac{\beta}{\alpha}\langle \frac{1}{n}A_{I^{(t)},:}(\bar{\bx}^{(t)}-\bx^{(t)}), \by^{(t)}_{I^{(t)}}-\by^{(t-1)}_{I^{(t)}}\rangle\\
    &- (\frac{\beta}{\alpha}-1)\frac{1}{\delta}\|\by^{(t)}-\by^{(t-1)}\|^2 \\
    \leq& \La(\bx^{(t+1)},\by^{(t)})-\La(\bx^{(t)},\by^{(t)}) + \frac{3\beta}{2\alpha}\delta \| \frac{1}{n}A_{I^{(t)},:}(\bar{\bx}^{(t)}-\bx^{(t)})\|^2 \\
    &-(\frac{3\beta}{4\alpha}-1)\frac{1}{\delta}\|\by^{(t)}-\by^{(t-1)}\|^2  \shortrnote{(since $ab\leq \delta a^2+1/(4\delta) b^2$)}\\
    \leq& \La(\bx^{(t+1)},\by^{(t)})-\La(\bx^{(t)},\by^{(t)}) + \frac{\beta}{\alpha}\delta \| \frac{1}{n}A_{I^{(t)},:}(\bar{\bx}^{(t)}-\bx^{(t)})\|^2 \\
    &-\frac{\beta}{4\alpha \delta}\|\by^{(t)}-\by^{(t-1)}\|^2  \shortrnote{(since $\beta/\alpha\geq 2$)}\\
    \leq& \La(\bx^{(t+1)},\by^{(t)})-\La(\bx^{(t)},\by^{(t)})-\frac{\beta}{4\alpha\delta}\|\by^{(t)}-\by^{(t-1)}\|^2\\
    &+\frac{\beta\delta Rk}{\alpha n^2}\|\bar{\bx}^{(t)}-\bx^{(t)}\|^2
    \end{align*}
Similarly to the previous setting, we get the whole primal-dual progress is bounded as follows:
\begin{align*}
    \nonumber
    & (\frac{\beta}{\alpha}-1)(\Delta_d^{(t)}-\Delta_d^{(t-1)})+\Delta_p^{(t)}-\Delta_p^{(t-1)}\\
    \leq & \La(\bx^{(t+1)},\by^{(t)})-\La(\bx^{(t)},\by^{(t)}) - \frac{\beta\delta}{4\alpha}\frac{k}{n\beta} \Delta_d^{(t)}\\
    & + \frac{5\beta R\delta k}{2\alpha \mu n^2}(\La(\bx^{ (t)} , \by^{ (t)} ) - \La(\bar{\bx}^{ (t)} , \by^{ (t)} ))  \\
    \leq & - \frac{\beta}{4\alpha} \frac{k\delta}{n\beta} \Delta_d^{(t)} - \left((1-\frac{5\beta R\delta k}{2\alpha \mu n^2})\frac{\mu}{4L} - \frac{5\beta R\delta k}{2\alpha \mu n^2} \right)\Delta_p^{(t)} 
\end{align*} 
Therefore, when we set a proper $k$ and $\delta$ such that $ \frac{\beta}{4\alpha} \frac{k\delta}{n\beta} = (\frac{\beta}{\alpha}-1)\left( (1-\frac{5\beta R\delta k}{2\alpha \mu n^2})\frac{\mu}{4L} - \frac{5\beta R\delta k}{2\alpha \mu n^2}\right) $, and since $\frac{\beta}{\alpha}-1\geq \frac{\beta}{2\alpha}$, we get $\delta  = \frac{1}{k}(\frac{L}{\mu n \beta}+\frac{5\beta R}{2\alpha \mu n^2}(1+4\frac{L}{\mu}))^{-1} $. And we have $\Delta^{(t)} -\Delta^{(t-1)}\leq -1/a\Delta^{(t)}$, where $a =\Ocal(\frac{L}{\mu}(1+\frac{\beta}{\alpha}\frac{R\beta}{n\mu})) $. Therefore it takes $t=\Ocal(\frac{L}{\mu}(1+\frac{\beta}{\alpha}\frac{R\beta}{n\mu})\log\frac{1}{\epsilon})$ iterations for the duality gap $\Delta^{(t)}$ to reach $\epsilon$ error. 
\end{proof} 


\subsection{Smooth Hinge Loss and Relevant Properties}
\label{sec:smooth_hinge_loss}
Smooth hinge loss is defined as follows: 
\begin{equation}
    \label{def:shl}
    h(z)=\left\{\begin{array}{ll}
        \frac{1}{2}-z & \text{if }z<0 \\
        \frac{1}{2}(1-z)^2 &  \text{if } z\in [0,1] \\
        0 & \text{otherwise.}
    \end{array}\right.
\end{equation}
Our loss function over a prediction $p$ associated with a label $\ell_i \in \{\pm 1\}$ will be $f_i(p)=h(p\ell_i)$. 
The derivative of smooth hinge loss $h$ is:
\begin{equation}
    h'(z)=\left\{\begin{array}{ll}
        -1 & \text{if }z<0 \\
        z-1 & \text{if }z\in [0,1]\\
        0 & \text{otherwise.}
    \end{array}\right.
\end{equation}
Its convex conjugate is:
\begin{equation}
    h^*(z^*)=\left\{\begin{array}{ll}
    \frac{1}{2}(z^*)^2+z^* & \text{if }z^*\in [-1,0] \\
        \infty & \text{otherwise.}
    \end{array}\right.
\end{equation}
Notice since $f_i(p)=h(\ell_i p)$, $f_i^*(p) = h^*(p/\ell_i)=h^*(p\ell_i)$.
\begin{claim}
\label{claim:indicator}
For a convex and $\beta$-smooth scalar function $f$, if it is $\alpha$ strongly convex over some convex set, and linear otherwise, then its conjugate function $f^*$ is $1/\beta$-strongly convex, and it is a $1/\alpha$-smooth function plus an indicator function over some interval $[a,b]$. 
\end{claim}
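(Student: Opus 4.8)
The statement is just the Fenchel-duality dictionary between smoothness and strong convexity, specialized to a scalar convex function whose affine tails force the conjugate to have bounded domain. I would first recall (or re-derive in two lines via conjugates) the textbook fact that, for a convex function on $\R$, being $\beta$-smooth is equivalent to its conjugate being $\frac1\beta$-strongly convex on its domain, and being $\alpha$-strongly convex is equivalent to its conjugate being $\frac1\alpha$-smooth. The hypotheses say $f$ is convex and $\beta$-smooth on all of $\R$ (each affine tail has a $0$-Lipschitz derivative, $f$ is automatically $C^1$ at the two junctions between a tail and the strongly convex middle piece since a $\beta$-smooth function is $C^{1}$, and continuity of $f'$ then makes it $\beta$-Lipschitz on the whole line), so the first half of the dictionary immediately gives that $f^*$ is $\frac1\beta$-strongly convex.

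\textbf{Domain of $f^*$.} Next I would pin down $\mathrm{dom}\,f^*$. Write $[c,d]$ for the convex set on which $f$ is $\alpha$-strongly convex and let $a,b$ be the slopes of the left and right affine tails, so $f'$ sweeps exactly $[a,b]$ as $x$ ranges over $\R$ (if the strongly convex region is a half-line or all of $\R$, read $a$ or $b$ as $\mp\infty$, and the indicator term below is vacuous). For $y>b$, $xy-f(x)$ grows like $(y-b)x$ as $x\to+\infty$, so $f^*(y)=+\infty$; symmetrically for $y<a$. For $y\in[a,b]$, the map $x\mapsto xy-f(x)$ is nondecreasing on $(-\infty,c]$ and nonincreasing on $[d,+\infty)$, hence its supremum is finite and attained at some point of $[c,d]$. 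So $\mathrm{dom}\,f^*=[a,b]$, and it remains to show the restriction of $f^*$ to $[a,b]$ equals a globally $\frac1\alpha$-smooth convex function $\phi$, i.e. $f^*=\phi+I_{[a,b]}$.

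\textbf{Smoothness on the interval, and the main obstacle.} For this I would introduce the auxiliary function $\tilde f$ that agrees with $f$ on $[c,d]$ and is continued outside $[c,d]$ by parabolic arcs of curvature exactly $\alpha$ matching the value and one-sided slope of $f$ at $c$ and $d$; then $\tilde f$ is $\alpha$-strongly convex on all of $\R$, so by the dictionary $\tilde f^*$ is $\frac1\alpha$-smooth on all of $\R$. The key identity I need is $f^*(y)=\tilde f^*(y)$ for every $y\in[a,b]$: on the tails $\tilde f'$ leaves $[a,b]$, so for such $y$ the map $x\mapsto xy-\tilde f(x)$ is again monotone on $(-\infty,c]$ and on $[d,+\infty)$ and its supremum is attained in $[c,d]$, where $f$ and $\tilde f$ coincide. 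Combined with the previous paragraph this gives $f^*=\tilde f^*+I_{[a,b]}$, which is the claim with $\phi=\tilde f^*$. I expect the only genuinely delicate steps to be these two ``supremum attained in $[c,d]$'' arguments — getting the boundary slopes $a,b$ and the one-sided derivatives at $c,d$ right — since everything else is the standard conjugate-duality computation applied on the whole line. (Sanity check: for smooth hinge loss $h$ one has $c=0,d=1,\alpha=\beta=1,a=-1,b=0$, and indeed $h^*(z^*)=\tfrac12(z^*)^2+z^*$ on $[-1,0]$ and $+\infty$ otherwise, which is $1$-strongly convex and equals a $1$-smooth function plus $I_{[-1,0]}$.)
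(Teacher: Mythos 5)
Your proof is correct and follows essentially the same route as the paper: the $\tfrac{1}{\beta}$-strong convexity of $f^*$ comes from the standard smoothness/strong-convexity conjugate duality, the domain of $f^*$ is identified as the interval of tail slopes by checking where the supremum diverges, and the $\tfrac{1}{\alpha}$-smoothness on that interval follows because for slopes in that interval the supremum is attained inside the strongly convex region. The only difference is in the last bookkeeping step --- you realize the globally $\tfrac{1}{\alpha}$-smooth part as the conjugate of a parabolic extension $\tilde f$, whereas the paper implicitly conjugates $f$ restricted to the strongly convex interval --- and your version is in fact slightly more explicit about exhibiting the ``smooth function plus indicator'' decomposition that the claim asserts.
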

\begin{proof}
To begin with, since $f''(x)\leq \beta,\forall x$, meaning $f$ is $\beta$-smooth, then with duality we have $f^*$ is $1/\beta$ strongly convex \cite{kakade2009duality}. Secondly, since $f$ is $\alpha$ strongly convex over a convex set, meaning an interval for $\R$, therefore $f$ could only be linear on $(-\infty,a]$ or $[b,\infty)$, and is $\alpha$-strongly convex over the set $[a,b]$ (Here for simplicity $a<b$ could be $\pm \infty$). We denote $f'(-\infty):=\lim_{x\rightarrow -\infty}f'(x)$ and $f'(-\infty)$ likewise. It's easy to notice that $f'(-\infty)\leq f'(a)<f'(b)\leq f'(\infty)$ since $f$ is convex overall and strongly convex over $[a,b]$. Therefore $f(y)> f(a)+f'(a)(y-a)$ when $y>a$ and $f(y)=f(a)+f'(a)(y-a)$ when $y\leq a$. 

Now since $f^*(x^*)\equiv \max_x\{x^*x-f(x) \}$, it's easy to verify that when $x^*<f'(a)$, $x^*x-f(x)=x^*x-f(a)-f'(a)(x-a)=-(f'(a)-x^*)x-f(a)+f'(a)a\rightarrow \infty$ when $x\rightarrow -\infty$. Similarly, when $x^*>f'(b)$, $f^*(x^*)=\infty$. On the other hand, when $x^*\in [f'(a),f'(b)]$, $f^*(x^*)=\max_x \{x^*x-f(x) \}=\max_{x\in [a,b]} \{x^*x-f(x)\}$. This is because $x^*a-f(a)\geq x^*y-f(y)=x^*y-f(y)-f'(a)(y-a),\forall y\leq a $, and similarly $x^*b-f(b)\geq x^*y-f(y)\forall y>b$. Therefore $f^*$ is $1/\alpha$ smooth over the interval $[f'(a),f'(b)]$, where $-\infty\leq f'(a)<f'(b)\leq \infty$. 

\end{proof}

\subsection{Convergence of Optimization over Trace Norm Ball}

The convergence analysis for trace norm ball is mostly similar to the case of $\ell_1$ ball. The most difference lies on the primal part, where our approximated update incur linear progress as well as some error. 

\begin{lemma}[Primal Progress for Algorithm \ref{alg:PD_tracenorm}]
\label{lemma:primal_progress_trace}
Suppose rank $\bar{X}^{(t)}\leq s$ and $\epsilon > 0$. If each $\tilde{X}$ computed in our algorithm is a $(\frac{1}{2},\frac{\epsilon}{8})$-approximate solution to \eqref{eqn:deltaX}, then for every $t$, it satisfies 
$\La(X^{(t+1)},Y^{(t)}) -  \La(X^{(t)},Y^{(t)})\leq -\frac{\mu}{8L} \Delta_p^{(t)}+\frac{\epsilon}{16}$. 
\end{lemma}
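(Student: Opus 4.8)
### Proof Proposal

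The plan is to mirror the three-line Frank-Wolfe argument from Section~\ref{sec:vignette} (Inequality~\eqref{eqn:fw_progress}), but carry the approximation slack $\epsilon/8$ through the chain of inequalities and track how it scales under the step size $\eta = \frac{\mu}{2L}$. Write $h_t \define \La(X^{(t)},Y^{(t)}) - D(Y^{(t)}) = \La(X^{(t)},Y^{(t)}) - \La(\bar X^{(t)},Y^{(t)}) = \Delta_p^{(t)}$ for the primal gap at the current dual iterate, with $Y^{(t)}$ held fixed throughout. Since $\La(\cdot,Y^{(t)})$ is $L$-smooth and $\mu$-strongly convex in $X$, and $X^{(t+1)} = (1-\eta)X^{(t)} + \eta\tilde X = X^{(t)} + \eta(\tilde X - X^{(t)})$, smoothness gives
\begin{equation}
\La(X^{(t+1)},Y^{(t)}) \leq \La(X^{(t)},Y^{(t)}) + \eta\langle \nabla_X\La(X^{(t)},Y^{(t)}), \tilde X - X^{(t)}\rangle + \frac{L\eta^2}{2}\|\tilde X - X^{(t)}\|_F^2,
\end{equation}
so the right-hand side is exactly $\La(X^{(t)},Y^{(t)}) + \eta\, l_t(\tilde X)$ in the notation of the Restated Definition, where $l_t(V) = \langle \nabla_X\La(X^{(t)},Y^{(t)}), V - X^{(t)}\rangle + \frac{L}{2}\eta\|V - X^{(t)}\|_F^2$.

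Next I would invoke the $(\frac{1}{2},\frac{\epsilon}{8})$-approximation guarantee: $l_t(\tilde X) \leq \frac{1}{2} l_t^* + \frac{\epsilon}{8}$, where $l_t^* = l_t(\bar X^{(t)})$ is attained at the true optimum of \eqref{eqn:deltaX} over rank-$\leq s$ feasible matrices (this is where rank $\bar X^{(t)}\leq s$ is used, so that $\bar X^{(t)}$ is itself feasible for the constrained subproblem). Then I bound $l_t^*$ from above by plugging the convex combination point, or more directly by the same computation as in \eqref{eqn:fw_progress}: evaluating $l_t$ at $\bar X^{(t)}$ and using convexity plus $\mu$-strong convexity of $\La(\cdot,Y^{(t)})$,
\begin{equation}
l_t(\bar X^{(t)}) \leq \langle \nabla_X\La(X^{(t)},Y^{(t)}), \bar X^{(t)} - X^{(t)}\rangle + \frac{L\eta}{2}\|\bar X^{(t)} - X^{(t)}\|_F^2 \leq \Big(-1 + \frac{L\eta}{\mu}\Big)\Delta_p^{(t)},
\end{equation}
using $\|\bar X^{(t)} - X^{(t)}\|_F^2 \leq \frac{2}{\mu}\Delta_p^{(t)}$ and $\langle \nabla_X\La, \bar X^{(t)} - X^{(t)}\rangle \leq -\Delta_p^{(t)}$. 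Note $l_t^* \leq 0$ since $X^{(t)}$ is feasible and $l_t(X^{(t)}) = 0$, so $\frac{1}{2}l_t^* \leq \frac{L\eta}{2\mu}l_t^* $... actually I must be careful with signs here, and should instead just use $l_t^* \leq (-1 + \frac{L\eta}{\mu})\Delta_p^{(t)} = -\frac{1}{2}\Delta_p^{(t)}$ when $\eta = \frac{\mu}{2L}$, so that $\frac{1}{2}l_t^* \leq -\frac{1}{4}\Delta_p^{(t)}$.

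Assembling the pieces: $\La(X^{(t+1)},Y^{(t)}) - \La(X^{(t)},Y^{(t)}) \leq \eta l_t(\tilde X) \leq \eta(\frac{1}{2}l_t^* + \frac{\epsilon}{8}) \leq \eta(-\frac{1}{4}\Delta_p^{(t)} + \frac{\epsilon}{8})$, and substituting $\eta = \frac{\mu}{2L} \leq \frac{1}{2}$ yields $-\frac{\mu}{8L}\Delta_p^{(t)} + \frac{\mu}{16L}\epsilon \leq -\frac{\mu}{8L}\Delta_p^{(t)} + \frac{\epsilon}{16}$, which is the claim. The main obstacle I anticipate is getting the sign and magnitude bookkeeping on $l_t^*$ exactly right — in particular justifying $l_t^* \leq -\frac{1}{2}\Delta_p^{(t)}$ rather than a weaker bound, and making sure the $(\gamma,\epsilon)$-approximation inequality $l_t(\tilde X)\leq (1-\gamma)l_t^* + \epsilon$ is applied with the correct sign given that $l_t^*$ is negative (so $(1-\gamma)l_t^* = \frac{1}{2}l_t^*$ is \emph{larger}, i.e.\ weaker, than $l_t^*$, which is the direction we need). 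A secondary subtlety is confirming that the $\epsilon/8$ in the approximation parameter, after multiplication by $\eta \leq 1/2$, indeed lands at $\epsilon/16$; this is immediate but worth stating. The rest is a routine transcription of the vignette argument with $f$ replaced by $\La(\cdot,Y^{(t)})$.
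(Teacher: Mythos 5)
Your derivation is essentially the right argument and is in fact more self-contained than the paper's own proof: the paper simply cites the analysis of \cite{allen2017linear} to obtain the contraction $\La(X^{(t+1)},Y^{(t)})-\La(\bar{X}^{(t)},Y^{(t)})\leq(1-\frac{\mu}{8L})(\La(X^{(t)},Y^{(t)})-\La(\bar{X}^{(t)},Y^{(t)}))+\frac{\epsilon\mu}{16L}$ and then rearranges, whereas you reconstruct that inequality from scratch by the vignette computation (smoothness gives $\La(X^{(t+1)},Y^{(t)})\leq\La(X^{(t)},Y^{(t)})+\eta\, l_t(\tilde X)$, the $(\frac12,\frac{\epsilon}{8})$-approximation gives $l_t(\tilde X)\leq\frac12 l_t(\bar X^{(t)})+\frac{\epsilon}{8}$, and convexity plus strong convexity give $l_t(\bar X^{(t)})\leq(-1+\frac{L\eta}{\mu})(\La(X^{(t)},Y^{(t)})-\La(\bar X^{(t)},Y^{(t)}))$). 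All of these steps, including the sign bookkeeping you worried about and the use of rank$(\bar X^{(t)})\leq s$ for feasibility, are correct.

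The one genuine discrepancy is your identification $h_t=\La(X^{(t)},Y^{(t)})-D(Y^{(t)})=\Delta_p^{(t)}$. The paper defines $\Delta_p^{(t)}\define\La(X^{(t+1)},Y^{(t)})-\La(\bar X^{(t)},Y^{(t)})$, i.e.\ the gap at the \emph{updated} iterate, so what your chain actually proves is $\La(X^{(t+1)},Y^{(t)})-\La(X^{(t)},Y^{(t)})\leq-\frac{\mu}{8L}\bigl(\La(X^{(t)},Y^{(t)})-\La(\bar X^{(t)},Y^{(t)})\bigr)+\frac{\epsilon}{16}$, which is not literally the stated lemma. You cannot simply replace the old gap by the new one, since $\La(X^{(t+1)},Y^{(t)})\leq\La(X^{(t)},Y^{(t)})$ can fail once the gap is of order $\epsilon$. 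The paper bridges this by the rearrangement $(1-\frac{\mu}{8L})(\La(X^{(t+1)},Y^{(t)})-\La(X^{(t)},Y^{(t)}))+\frac{\mu}{8L}\Delta_p^{(t)}\leq\frac{\epsilon\mu}{16L}$ followed by division by $1-\frac{\mu}{8L}$; this conversion costs a factor $(1-\frac{\mu}{8L})^{-1}\leq\frac{8}{7}$ on the error term, which the paper (and its $\ell_1$ analogue, Lemma \ref{lemma:primal_progress}) silently absorbs. Adding that one rearrangement step makes your proof complete and matches the paper's statement.
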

\begin{proof}
Refer to the proof in \cite{allen2017linear} we have:
\begin{align*}
\La(X^{(t+1)},Y^{(t)})-\La(\bar{X}^{(t)},Y^{(t)}) \leq (1-\frac{\mu}{8L}) \left(\La(X^{(t)},Y^{(t)}))-\La(\bar{X}^{(t)},Y^{(t)})\right)+\frac{\epsilon\mu}{16 L}
\end{align*}
Now move the first term on the RHS to the left and rearrange we get:
\begin{align*}
(1-\frac{\mu}{8L})(\La(X^{(t+1)},Y^{(t)})-\La(X^{(t)},Y^{(t)})) + \frac{\mu}{8L} \left(\La(X^{(t+1)},Y^{(t)}))-\La(\bar{X}^{(t)},Y^{(t)})\right) \leq\frac{\epsilon\mu}{16 L}
\end{align*}
Therefore we get:
\begin{align*}
\La(X^{(t+1)},Y^{(t)})-\La(X^{(t)},Y^{(t)})) \leq  -\frac{\mu}{8L} \Delta_p^{(t)} +\frac{\epsilon}{16}. 
\end{align*}
\end{proof}
Now back to the convergence guarantees on the trace norm ball.
\begin{proof}[Proof of Theorem \ref{thm:main_trace}]

We again define $\Delta = \frac{1}{n}A(\bar{X}^{(t)}-X^{(t)})$. $G=\nabla_Y \La(X^{(t)}, Y^{(t)})$ such that $Y^{(t)}_{I^{(t)},:}-Y^{(t-1)}_{I^{(t)},:}= \delta(\frac{1}{n}\langle A_{I^{(t)},:}X^{(t)}\rangle-G_{I^{(t)},:})$. Again we get $\|\Delta\|_F^2\leq \frac{R}{n^2}\|\bar{X}^{(t)}-X^{(t)}\|_F^2.$ 

$$ \Delta_d^{(t)}\leq \frac{\beta}{2}\|\frac{1}{n}A\bar{X}^{(t)}-G \|_F^2\leq \frac{n\beta}{2k}\|\frac{1}{n}A_{I^{(t)},:}\bar{X}^{(t)}-G_{I^{(t)},:}\|^2_F $$

Other parts are exactly the same and we get:
\begin{align*}
    \nonumber
    & (\frac{\beta}{\alpha}-1)(\Delta_d^{(t)}-\Delta_d^{(t-1)})+\Delta_p^{(t)}-\Delta_p^{(t-1)}\\
    \leq & \La(X^{(t+1)},Y^{(t)})-\La(X^{(t)},Y^{(t)}) - \frac{\beta\delta}{4\alpha}\frac{k}{n\beta} \Delta_d^{(t)}\\
    & + \frac{5\beta R\delta k}{2\alpha \mu n^2}(\La(X^{ (t)}, Y^{ (t)} ) - \La(\bar{X}^{(t)} , Y^{ (t)} ))  \\
    \leq & - \frac{\beta}{4\alpha} \frac{k\delta}{n\beta} \Delta_d^{(t)} - \left((1-\frac{5\beta R\delta k}{2\alpha \mu n^2})\frac{\mu}{8L} - \frac{5\beta R\delta k}{2\alpha \mu n^2} \right)\Delta_p^{(t)} +(1-\frac{5\beta R\delta k}{2\alpha \mu n^2})\frac{\epsilon}{16}\longrnote{(Lemma \ref{lemma:primal_progress_trace})}
\end{align*}
Therefore when $\delta \leq \frac{1}{k}(\frac{L}{\mu n \beta}+\frac{5\beta R}{2\alpha \mu n^2}(1+8\frac{L}{\mu}))^{-1} $, it satisfies $\Delta^{(t)}-\Delta^{(t-1)}\leq -\frac{k\delta}{2\beta n}\Delta^{(t)} + \frac{\epsilon}{16}.$ Therefore denote $a=\frac{2\beta n}{k\delta}$, we get $\Delta^{(t)}\leq \frac{a}{a+1}(\Delta^{(t-1)}+\frac{\epsilon}{16}) $. Therefore we get $\Delta^{(t)}\leq (\frac{a}{a+1})^t \Delta^{(0)} + \frac{\epsilon}{16} \sum_{i=1}^t (\frac{a}{a+1})^i \leq (\frac{c}{c+1})^t \Delta^{(0)} + \epsilon/16 $. Since $(\frac{a}{a+1})^t\leq e^{-t/a}$, it takes around $a=\Ocal(\frac{L}{\mu}(1+\frac{\beta}{\alpha}\frac{R\beta}{n\mu})\log\frac{1}{\epsilon})$ iterations for the duality gap to get $\epsilon$-error. 
\end{proof}

\subsection{Difficulty on Extension to Polytope Constraints} 
\label{sec:polytope}
Another important type of constraint we have not explored in this paper is the polytope constraint. Specifically, 
$$\min_{\bx \in M \subset \R^d} f(A\bx)+g(\bx) , M = conv(\Acal), \text{with only access to: LMO}_{\Acal(\br)} \in \argmin_{\bx\in \Acal} \langle \br, \bx \rangle,$$
where $\Acal \subset \R^d, |\Acal|=m$ is a finite set of vectors that is usually referred as atoms. It is worth noticing that this linear minimization oracle (LMO) for FW step naturally chooses a single vector in $\Acal$ that minimizes the inner product with $\bx$. Again, this FW step creates some "partial update" that could be appreciated in many machine learning applications. Specifically, if our computation of gradient is again dominated by a matrix-vector (data matrix versus variable $\bx$) inner product, we could possibly pre-compute each value of $\bv_i:=A\bx_i, \bx_i\in\Acal$, and simply use $\bv_i$ to update the gradient information when $\bx_i$ is the greedy direction provided by LMO. 

When connecting to our sparse update case, we are now looking for a $k$-sparse update, $k\ll m=|\Acal|$, with the basis of $\Acal$, i.e., $\tilde{\bx} =\sum_{i=1}^k \lambda_i \bx_{n_i}, \bx_{n_i}\in \Acal$. In this way, when we update $\bx^+\leftarrow (1-\eta)\bx + \eta\tilde{\bx}$, we will only need to compute $\sum_{i=1}^k \bv_{n_i}$ which is $\Ocal(kd)$ time complexity. 

However, to enforce such update that is "sparse" on $\Acal$ is much harder. To migrate our algorithms with $\ell_1$ ball or trace norm ball, we will essentially be solving the following problem:
$$ \tilde{\bx}\leftarrow \argmin_{\Lambda \in \Delta^m, \|\Lambda\|_0\leq k, \bx = \sum_{i=1}^m \lambda_i \bx_i, \bx_i \in \Acal}\langle \bg, \by \rangle + \frac{1}{2\eta}\|\by-\bx \|_2^2, $$
where $\Delta^m$ is the $m$ dimensional simplex, and $\bg$ is the current gradient vector. 

Unlike the original sparse recovery problem that could be relaxed with an $\ell_1$ constraint to softly encourage sparsity, it's generally much harder to find the $k$ sparse $\Lambda$ in this case. Actually, it is as hard as the lattice problem \cite{khot2005hardness} and is NP hard in general. 

Therefore we are not able to achieve linear convergence with cheap update with polytope-type constraints. Nonetheless, the naive FW with primal dual formulation should still be computational efficient in terms of per iteration cost, where a concentration on SVM on its dual form has been explored by \cite{lacoste2012block}. 

\section{More Results on Empirical Studies}

\subsection{More experiments with $\ell_1$ norm}
\label{sec:more_l1_result}
To investigate more on how our algorithms perform with different choices of parameters, we conducted more empirical studies with different settings of condition numbers. Specifically, we vary the parameter $\mu$ that controls the strong convexity of the primal function. Experiments are shown in Figure \ref{fig:more}.

\begin{figure*}
\begin{center}
\includegraphics[width=0.34\linewidth]{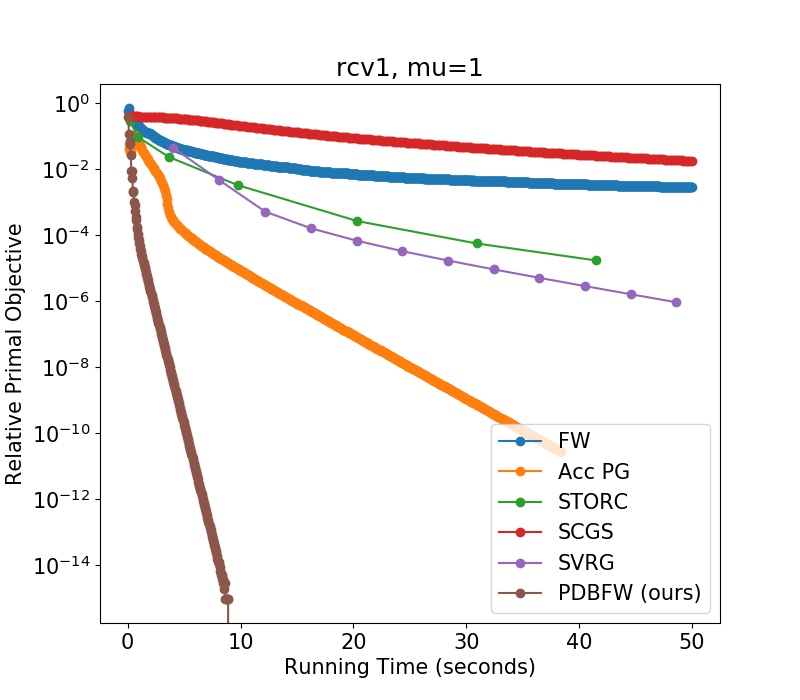}
\hspace{-15pt}
\includegraphics[width=0.34\linewidth]{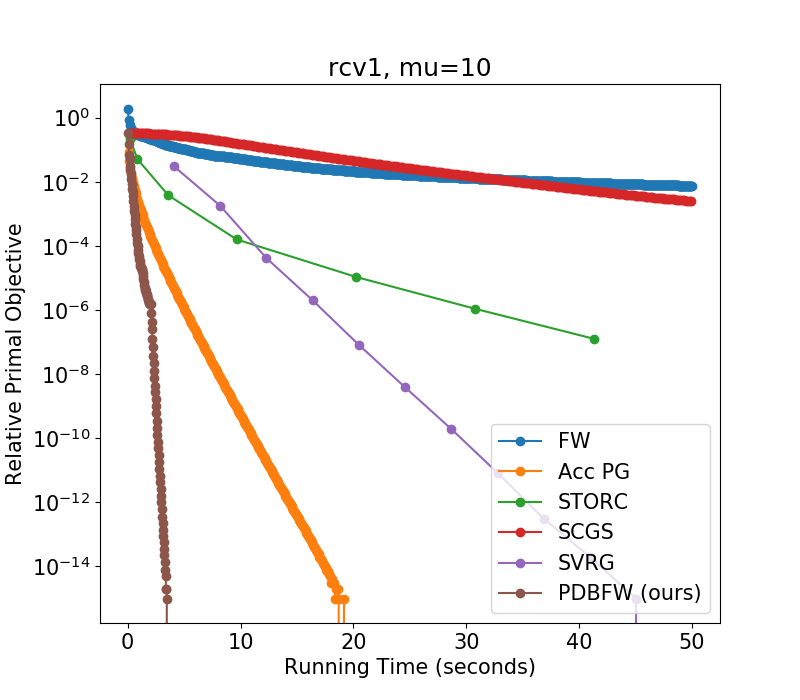}
\hspace{-15pt}
\includegraphics[width=0.34\linewidth]{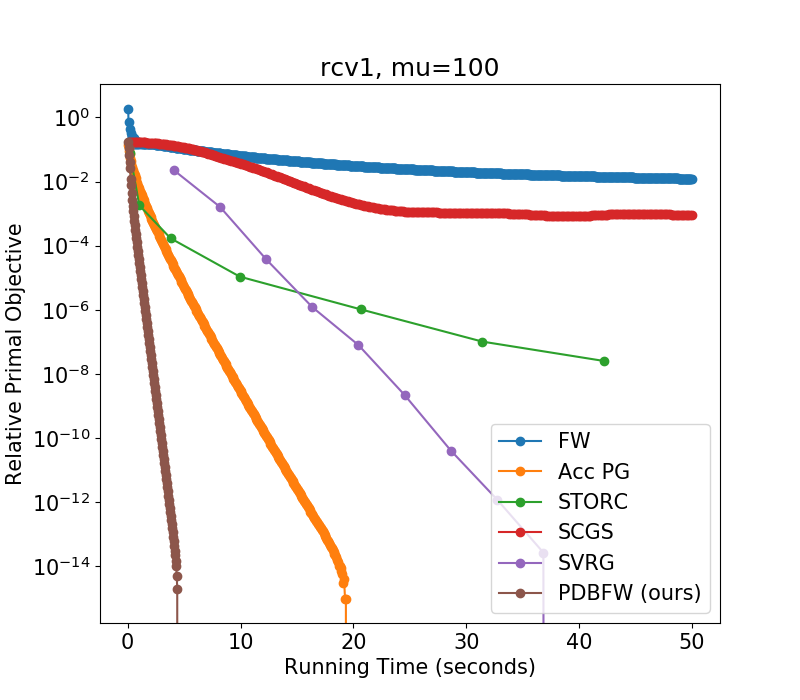}
\includegraphics[width=0.34\linewidth]{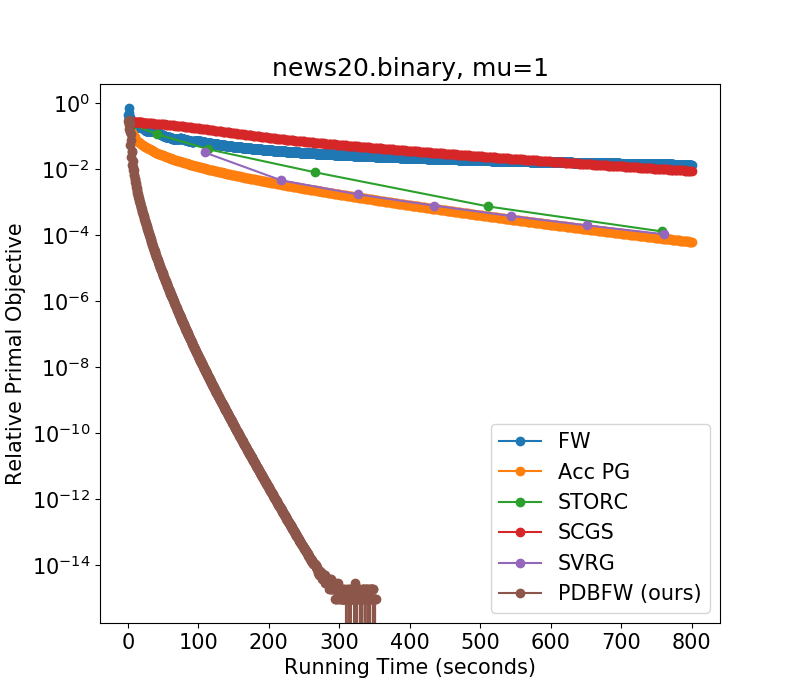}
\hspace{-15pt}
\includegraphics[width=0.34\linewidth]{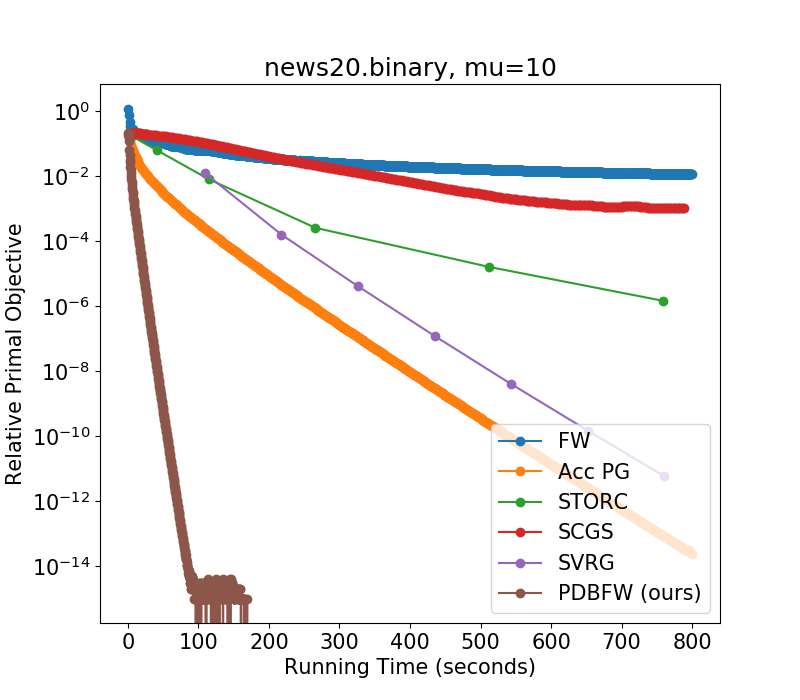}
\hspace{-15pt}
\includegraphics[width=0.34\linewidth]{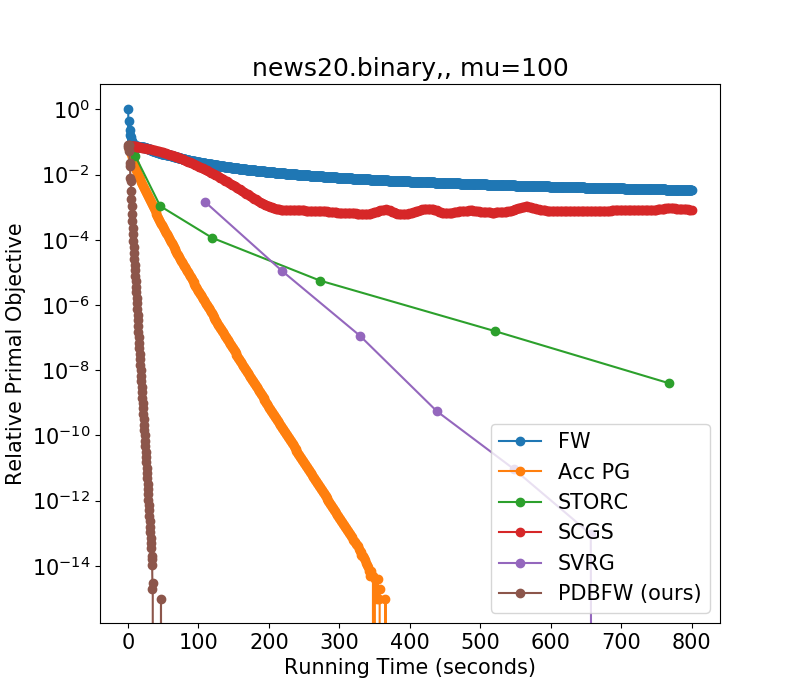}
\end{center}
\caption{Convergence result comparison of different algorithms on smoothed hinge loss by varying the coefficient of the regularizer. The first row is the results ran on the rcv1.binary dataset, while the second row is the results ran on the news20.binary dataset. The first column is the result when the regularizer coeffcient $\mu$ is set to $1/n$. The middle column is when $\mu = 10/n$, and the right column is when $\mu = 100/n$.}
\label{fig:more} 
\end{figure*}
 
\end{document}